\newtheorem{theorem-rst}[theorem]{Theorem}
\newtheorem{lemma-rst}[theorem]{Lemma}
\newtheorem{proposition-rst}[theorem]{Proposition}
\newtheorem{assumption-rst}[theorem]{Assumption}
\newtheorem{definition-rst}[theorem]{Definition}
\newtheorem{claim-rst}[theorem]{Claim}
\newtheorem{corollary-rst}[theorem]{Corollary}
\DeclarePairedDelimiter\br{(}{)}
\DeclarePairedDelimiter\brs{[}{]}
\DeclarePairedDelimiter\brc{\{}{\}}
\DeclarePairedDelimiter\abs{\lvert}{\rvert}
\DeclarePairedDelimiter\norm{\lVert}{\rVert}
\DeclarePairedDelimiter\ceil{\lceil}{\rceil}
\newcommand{\E}{\mathbb{E}}
\newcommand{\N}{\mathbb{N}}
\newcommand{\R}{\mathbb{R}}
\newcommand{\HE}{\mathbb{H}} 
\newcommand{\G}{\mathbb{G}} 
\newcommand{\F}{\mathcal{F}} 
\newcommand{\Var}{\mathrm{Var}} 
\newcommand{\oracleApp}{\alpha } 
\newcommand{\oracleProb}{\beta} 
\newcommand{\appFac}{\oracleApp\oracleProb} 
\newcommand{\Nbatch}{K}
\newcommand{\Narms}{L}
\newcommand{\Nitems}{M}
\newcommand{\dr}[1]{\Delta_{#1}}
\newcommand{\Ind}[1]{\mathds{1}\brc*{#1}}
\newcommand{\kl}{D_\mathrm{KL}}
\newcommand{\klBin}{\mathrm{kl}}
\title[Batch-Size Independent Regret Bounds for the CMAB Problem]{Batch-Size Independent Regret Bounds for the Combinatorial Multi-Armed Bandit Problem}
\begin{document}

\maketitle

\begin{abstract}%

We consider the combinatorial multi-armed bandit (CMAB) problem, where the reward function is nonlinear. In this setting, the agent chooses a batch of arms on each round and receives feedback from each arm of the batch. The reward that the agent aims to maximize is a function of the selected arms and their expectations. In many applications, the reward function is highly nonlinear, and the performance of existing algorithms relies on a global Lipschitz constant to encapsulate the function's nonlinearity. This may lead to loose regret bounds, since by itself, a large gradient does not necessarily cause a large regret, but only in regions where the uncertainty in the reward's parameters is high. 
To overcome this problem, we introduce a new smoothness criterion, which we term \emph{Gini-weighted smoothness}, that takes into account both the nonlinearity of the reward and concentration properties of the arms. We show that a linear dependence of the regret in the batch size in existing algorithms can be replaced by this smoothness parameter. This, in turn, leads to much tighter regret bounds when the smoothness parameter is batch-size independent.
For example, in the probabilistic maximum coverage (PMC) problem, that has many applications, including influence maximization, diverse recommendations and more, we achieve dramatic improvements in the upper bounds. We also prove matching lower bounds for the PMC problem and show that our algorithm is tight, up to a logarithmic factor in the problem's parameters.

\end{abstract}

\begin{keywords}%
  Multi-Armed Bandits, Combinatorial Bandits, Probabilistic Maximum Coverage, Gini-Weighted Smoothness, Empirical Bernstein
\end{keywords}

\section{Introduction}


The multi-armed bandit (MAB) problem is one of the most elementary problems in decision making under uncertainty. Under this setting, the agent must choose an action (or arm) on each round, out of $\Narms$ possible actions. It then observes the chosen arm's reward, which is generated from some fixed unknown distribution, and aims to maximize the average cumulative reward \citep{robbins1952some}. Equivalently, the agent minimizes its expected cumulative regret, i.e., the difference between the best achievable reward and the agent's cumulative reward. This framework enables us to understand and control the trade-off between information gathering (`exploration') and reward maximization (`exploitation'), and many of the current reinforcement learning algorithms are based on exploration concepts that originate from MAB \citep{jaksch2010near,bellemare2016unifying,osband2013more,gopalan2015thompson}.

One of the active research directions in MAB consists in extending the model to support more complicated feedbacks from the environment and more complex reward functions. An important extension that follows this direction is the combinatorial multi-armed-bandit (CMAB) problem with semi-bandit feedback \citep{chen2016combinatorialA}. Instead of choosing a single arm, the agent selects a subset of the arms $A_t$ (a `batch'), and observes feedback $X_a(t)$ from each of the arms $a\in A_t$ (`semi bandit feedback'). The reward can be a general function of the expectations $\mu_a=\E\brs*{X_a(t)}$, with the linear function as the most common example: $r(A_t;\mu)=\sum_{a\in A_t} \mu_a$ for any batch of size $\abs{A_t}\le \Nbatch$.

Another common case that falls under the CMAB framework, and will benefit from the results of this paper, is the bandit version of the probabilistic maximum coverage (PMC) problem. Under this setting, each arm is a random set that may contain some subset of $\Nitems$ possible items, according to a fixed probability distribution. On each round, the agent chooses a batch of $\Nbatch$ sets and aims to maximize the number of items that appear in any of the sets (i.e., the size of the union of the sets). In the bandit setting, we assume that the probabilities that items appear in a set are unknown and aim to maximize the item coverage while concurrently learning the probabilities. Variants of the PMC bandit problem have many practical applications, such as influence maximization \citep{vaswani2015influence}, ranked recommendations \citep{kveton2015cascading}, wireless channel monitoring \citep{arora2011sequential}, online advertisement placement \citep{chen2016combinatorialA} and more. 

Although existing algorithms offer solutions to the CMAB framework under very general assumptions \citep{chen2016combinatorialA,chen2016combinatorialB}, there are several issues that still pose a major challenge in the design and analysis of practical algorithms. Notably, most of the existing algorithms quantify the nonlinearity of the function using a global Lipschitz constant. However, large gradients do not necessarily translate to a large regret, but rather the combined influence of the gradient size and the local parameter uncertainty.
More specifically, if there are regions in which tight parameter estimates can be derived, the regret will not be large even if the gradients are large. Conversely, in regions where the parameter uncertainty is large, smaller gradients can still cause a large regret.
For example, consider the reward function $r(A;\mu)=\sum_{a\in A}\min\brc*{c\mu_a,1}$, for $c\gg1$ and $\mu_a\in\brs*{0,1}$, and its translated version $r'(A;\mu)=\sum_{a\in A}\max\brc*{\min\brc*{c(\mu_a-\frac{1}{2},1},0}$. Despite the fact that both functions have the same Lipschitz constant, the regret in the first problem can be much smaller, since parameters are easier to estimate when they are close to the edge of their domain. Another similar example is the PMC bandit problem, in which the reward is approximately linear when the coverage probabilities are small, but declines exponentially when they are large. Thus, the na\"ive bound cannot capture the interaction between the gradient size and the parameter uncertainty, which results in loose regret bounds.

In this paper, we aim to utilize this principle for the CMAB framework. To this end, we introduce a new smoothness criterion called \textit{Gini-weighted smoothness}. This criterion takes into account both the nonlinearity of the reward function and the concentration properties of bounded random variables around the edges of their domain. We then suggest an upper confidence bound (UCB) based strategy \citep{auer2002finite}, but replace the classical Hoeffding-type confidence intervals with ones that depend on the empirical variance of the arms, and are based on the Empirical Bernstein inequality \citep{audibert2009exploration}. 
We show that Bernstein-type bounds capture similar properties to the Gini-weighted smoothness, and thus allow us to derive tighter regret bounds. Notably, the linear dependence of the regret bound in the batch size is almost completely removed, except for a logarithmic factor, and the batch size only affects the regret through the Gini-smoothness parameter. In problems in which this parameter is batch-size independent, including the PMC bandit problem, our new bound is tighter by a factor of the batch size $\Nbatch$. This is comparable to the best possible improvement due to independence assumption in the linear CMAB problem \citep{degenne2016combinatorial}, but without any additional statistical assumptions. 

Moreover, we demonstrate the tightness of our regret bounds by proving matching lower bounds for the PMC bandit problem, up to logarithmic factors in the batch size. To do so, we construct an instance of the PMC bandit problem that is equivalent to a classical MAB problem and then analyze the lower bounds of this problem. We also show that in contrast to the linear CMAB problem, the lower bounds do not change even if different sets are independent. To the best of our knowledge, our algorithm is the first to achieve tight regret bounds for the PMC bandit problem.

\section{Related Work}

The multi-armed bandits' literature is vast. We thus cover only some aspects of the area, and refer the reader to \citep{bubeck2012regret} and \citep{lattimore2018bandit} for a comprehensive survey. We employ the Optimism in the Face of Uncertainty (OFU) principle \citep{lai1985asymptotically}, which is one of the most fundamental concepts in MAB, and can be found in many known MAB algorithms (e.g., \citealt{auer2002finite,garivier2011kl}). While many algorithms rely on Hoeffding-type concentration bounds to derive an upper confidence bound (UCB) of an arm, a few previous works also apply Bernstein-type bounds and demonstrate superior performance, both in theory and in practice \citep{audibert2009exploration,mukherjee2018efficient}. 

The general stochastic combinatorial multi-armed bandit framework was introduced in \citep{chen2013combinatorial}. They presented CUCB, an algorithm that uses UCB per arm (or 'base arm'), and then inputs the optimistic value of the arm into a maximization oracle. 
Many preceding works also fall under the CMAB framework, but mainly focus on a specific reward function \citep{caro2007dynamic,gai2010learning,gai2012combinatorial,liu2012adaptive}, or work in the adversarial setting \citep{cesa2012combinatorial}.  While most algorithms for the CMAB setting follow the OFU principle, a few employ Thompson Sampling \citep{thompson1933likelihood}, e.g., \citep{wang2018thompson,huyuk2018thompson} for the semi-bandit feedback and \citep{gopalan2014thompson} for the full-bandit feedback. In recent years, there have been extensive studies on deriving tighter bounds, but these works mostly address the linear CMAB problem \citep{kveton2014matroid,kveton2015tight,combes2015combinatorial,degenne2016combinatorial}. More recently, tighter regret bounds were derived for this framework in \citep{wang2017improving}, and also allow probabilistically triggered arms. Nevertheless, we show that in our setting, these bounds can be improved by a factor of the batch size $\Nbatch$ for many problems, e.g., the PMC bandit problem, and are comparable, up to logarithmic factors, otherwise. 

Empirical Bernstein Inequality was first used for the CMAB problem in \citep{gisselbrecht2015whichstreams} for linear reward functions, and was later used in \citep{perrault2018finding} for sequential search-and-stop problems. Both works focus on specific reward functions and utilize Bernstein inequality to get variance-dependent regret bounds. In contrast, we analyze general reward functions and exploit the relation between the confidence interval and the reward function to derive tighter regret bounds. 

The PMC bandit problem is the bandit version of the maximum coverage problem \citep{hochbaum1996approximation},
a well studied subject in computer science with many variants and extensions. 
The bandit variant is closely related to the influence maximization problem \citep{vaswani2015influence,carpentier2016revealing,wen2017online}, in which the agent chooses a set of nodes in a graph, that influence other nodes through random edges, and aims to maximize the number of influenced nodes. 
Another related setting is the cascading bandit problem (e.g., \citealt{kveton2015cascading,kveton2015combinatorial,combes2015learning,lattimore2018toprank}), in which a list of items is sequentially shown to a user until she finds one of them satisfactory. This is equivalent to a coverage problem with a single object, but only partial feedback - the user will not give any feedback about items that appear after the one she liked. In both settings, the focus is very different than ours. In influence maximization, the focus is on the diffusion inside a graph and the graph structure, and in cascading bandits on the partial feedback and the list ordering. They are thus complementary to our framework and could benefit from our results.

\section{Preliminaries} \label{section:prelim}

We work under the stochastic combinatorial-semi bandits framework, when the reward is the weighted sum of smooth monotonic functions.  Assume that there are $\Narms$ arms (`base arms'), and let $\mathcal{A}\subset 2^{\brs*{\Narms}}$ be the action set, i.e., the collection of possible batches (actions) from which the agent can choose, with $\brs*{\Narms}=\brc*{1,
\dots,\Narms}$. Also assume that the size of any batch $A\in\mathcal{A}$ is bounded by $\abs{A}\le\Nbatch$. Denote the reward function of an action $A$ with arm parameters $p$ by $r(A;p)$ and assume that the reward is the weighted sum of $\Nitems$ functions, $r(A;p)=\sum_{i=1}^\Nitems w_i r_i(A;p_i)$, for some fixed weights $\brc*{{w_i}}_{i=1}^\Nitems, w_i\ge0$ and $p_i\in\R^L$. Without loss of generality, we also assume $r_i(A;x)\ge0$. 

The agent interacts with the environment as follows: on each round $t$, the agent chooses an action $A_t\in\mathcal{A}$. 
Then, for each arm $j\in A_t$, it observes feedback $X_{ij}(t)\in\brs*{0,1}$ for any $i\in\brs*{\Nitems}$, with mean $\E\brs*{X_{ij}(t)}=p_{ij}$. For ease of notation, assume that $X_{ij}(t)=0\,$ if $\,j\notin A_t$. 
We denote the empirical estimators of the parameters $p_{ij}$ by 
$\hat{p}_{ij}(t)=\frac{1}{N_j(t)}\sum_{\tau=1}^t X_{ij}(\tau)$, where $N_j\br*{t}=\sum_{\tau=1}^t \Ind{j\in A_\tau}$ is the number of times an arm $j$ was chosen up to time $t$, and $\Ind{\cdot}$ is the indicator function. We also denote the empirical variance by $\hat{V}_{ij}(t)=\frac{1}{N_j(t)}\sum_{\tau=1}^t X_{ij}^2(\tau) - \br*{\hat{p}_{ij}(t)}^2$. The estimated parameters $\hat{p}_{ij}(t)$ are concentrated around their mean according to the Empirical Bernstein inequality:

\begin{lemma}[Empirical Bernstein]\citep{audibert2009exploration}
\label{lemma:emp_bernstein}
Let $X_1,\dots,X_s$ be i.i.d random variables taking their values in $[0,1]$, and let $\mu=\E \brs*{X_i}$ be their common expected value. Consider the empirical mean $\bar{X}_s$ and variance $V_s$ defined respectively by
\begin{center}
$\bar{X}_s=\frac{1}{s}\sum_{i=1}^s X_i \enspace$ and $\enspace V_s=\frac{1}{s}\sum_{i=1}^s \br*{X_i-\bar{X}_s}^2=\frac{1}{s}\br*{\sum_{i=1}^s X_s^2} - \bar{X}_s^2$.
\end{center}
Then, for any $s\in\N$ and $x>0$, it holds that $\Pr\brc*{\abs{\bar{X}_s-\mu} \ge \sqrt{\frac{2V_s x}{s}} + \frac{3 x}{s}}\le 3e^{-x}$.
\end{lemma}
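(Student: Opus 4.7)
The strategy is to reduce to the (oracle) Bernstein inequality, separately control the gap between the empirical variance $V_s$ and the true variance $\sigma^2=\Var(X_1)$, and combine the two via a union bound. First, I would apply the classical Bernstein inequality to the i.i.d.\ $[0,1]$ variables $X_1,\dots,X_s$ to obtain that with probability at least $1-2e^{-x}$,
$$\abs*{\bar X_s-\mu}\le\sigma\sqrt{\frac{2x}{s}}+\frac{x}{3s}.$$
This already has the right shape, with $\sigma$ in place of $\sqrt{V_s}$, so the only remaining work is to replace $\sigma$ by $\sqrt{V_s}$ up to a lower-order correction.

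Second, I would show that with probability at least $1-e^{-x}$, $\sigma\le\sqrt{V_s}+c\sqrt{x/s}$ for a small constant $c$. Writing
$$\sigma^2-V_s=\br*{\E\brs*{X_1^2}-\frac{1}{s}\sum_i X_i^2}+(\bar X_s-\mu)(\bar X_s+\mu),$$
and noting that $X_i^2\in[0,1]$ and $\bar X_s+\mu\le 2$, both summands concentrate at rate $\sqrt{x/s}$ by Bernstein (or Hoeffding) applied to the bounded variables $X_i^2$ and $X_i$. Using $\abs*{\sqrt a-\sqrt b}\le\sqrt{\abs*{a-b}}$ for $a,b\ge 0$ then transfers the control from $\sigma^2-V_s$ to $\sigma-\sqrt{V_s}$. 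A cleaner but more technical alternative is to apply the entropy method directly to $V_s$ as a self-bounded functional, which is the route of the original derivation in \citep{audibert2009exploration} and produces sharper constants in a single shot.

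Third, I would plug this variance bound into the Bernstein estimate of step one. Since $\sigma\sqrt{2x/s}\le\sqrt{2V_s x/s}+c\sqrt{2}\,x/s$, the right-hand side becomes $\sqrt{2V_s x/s}+(1/3+c\sqrt{2})\,x/s$, and a union bound over the two failure events yields total probability at most $3e^{-x}$, matching the statement. Choosing $c$ appropriately in the variance-concentration step ensures $1/3+c\sqrt{2}\le 3$, delivering the stated constants.

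The delicate part is pinning down the specific constant $3$ in the $x/s$ term: the naive chain --- Bernstein for $\bar X_s$, Bernstein for $\frac{1}{s}\sum_i X_i^2$, the triangle inequality, and the square-root bound --- tends to inflate the constant. Tightening it typically requires either joint exponential-moment control of the pair $(\bar X_s,V_s)$, or a self-bounding / entropy-method argument exploiting the fact that $V_s$ for $[0,1]$-valued samples is a self-bounded function in the sense of Boucheron--Lugosi--Massart. Since the inequality is standard \citep{audibert2009exploration}, the intention is to follow their bookkeeping to supply the exact constants, while the three-step picture above (Bernstein $+$ variance concentration $+$ union bound) fixes the structure of the argument.
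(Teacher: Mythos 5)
The paper itself gives no proof of this lemma---it is imported verbatim from \citep{audibert2009exploration}---so the only meaningful benchmark is that reference, and your three-step skeleton (two-sided Bernstein for $\bar X_s$ at cost $2e^{-x}$, a one-sided comparison of $\sigma=\sqrt{\Var\brc*{X_1}}$ with $\sqrt{V_s}$ at cost $e^{-x}$, union bound to $3e^{-x}$) is exactly the structure of the original derivation. The plan is therefore sound in outline.

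There is, however, a genuine gap in the elementary version of your second step, and it is a rate problem rather than the constant-bookkeeping issue you flag. Hoeffding or Bernstein applied to the bounded variables $X_i^2$ and $X_i$ gives $\abs*{\sigma^2-V_s}\lesssim\sqrt{x/s}$, and the inequality $\abs*{\sqrt a-\sqrt b}\le\sqrt{\abs*{a-b}}$ then yields only $\abs*{\sigma-\sqrt{V_s}}\lesssim (x/s)^{1/4}$, not the $c\sqrt{x/s}$ that step three requires; feeding $(x/s)^{1/4}$ into $\sigma\sqrt{2x/s}$ produces an error of order $(x/s)^{3/4}$, which dominates $x/s$ and cannot be absorbed into the $3x/s$ term. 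The repair---which is what the cited proof effectively does---is to apply one-sided Bernstein to $Y_i=(X_i-\mu)^2\in[0,1]$, whose variance satisfies $\Var\brc*{Y_i}\le\E\brs*{Y_i}=\sigma^2$ because $Y_i\le1$. This gives, with probability $1-e^{-x}$, $\sigma^2-\frac1s\sum_iY_i\le\sigma\sqrt{2x/s}+\frac{x}{3s}$; the left side is linear in $\sigma$, so completing the square yields $\sigma\le\sqrt{\frac1s\sum_iY_i+O(x/s)}+O(\sqrt{x/s})$, and since $\frac1s\sum_iY_i=V_s+(\bar X_s-\mu)^2$ the already-controlled deviation of $\bar X_s$ (together with $\sigma\le\frac12$) closes the loop at the correct $\sqrt{x/s}$ rate. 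Your decomposition through $\E\brs*{X_1^2}-\frac1s\sum_iX_i^2$ discards precisely this variance-proxy structure. With the corrected step the union bound gives $3e^{-x}$ as you describe; pinning the coefficient of $x/s$ down to exactly $3$ still requires the tighter accounting of \citep{audibert2009exploration} (or the self-bounding route you mention), as you correctly anticipate.
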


We require the functions $r_i(A;p_i)$ to be monotonic Gini-smooth with smoothness parameters $\gamma_\infty,\gamma_g$, which we define in the following:
\begin{definition}
\label{def:GiniSmooth}
Let $f(A;x):\mathcal{A}\times\brs*{0,1}^\Narms\to\R$ be a differentiable function in $x\in\br*{0,1}^\Narms$ and continuous in $x\in\brs*{0,1}^\Narms$, for any $A\in\mathcal{A}$. The function $f(A;x)$ is said to be monotonic Gini-smooth, with smoothness parameters $\gamma_\infty$ and $\gamma_g$, if:
\begin{enumerate}
    \item For any $A\in\mathcal{A}$, the function is monotonically increasing with bounded gradient, i.e., for any $i\in A$ and $x\in\br*{0,1}^\Narms$, $0\le \frac{\partial f(A;x)}{\partial x_i} \le \gamma_\infty$. If $i\notin A$, then $\frac{\partial f(A;x)}{\partial x_i}=0$ for all $x\in\br*{0,1}^\Narms$.
    \item For any $A\in\mathcal{A}$ and $x\in\br*{0,1}^\Narms$, it holds that \vspace {-0.1cm}
    {\small
    \begin{equation}  
    \label{eq:smoothDef}
    \sqrt{\sum_{i=1}^\Narms x_i(1-x_i) \br*{\frac{\partial f(A;x)}{\partial x_i}}^2}\le \gamma_g \enspace .
    \end{equation} }
    Throughout the paper, we refer this condition as the Gini-weighted smoothness\footnote{The name is motivated by the similarity of the weights to the Gini impurity $\sum_i x_i(1-x_i)$.} of $f(A;x)$.
\end{enumerate}
\end{definition}

While the first condition is very intuitive, and is equivalent to the standard smoothness requirement \citep{wang2017improving}, the second condition demands further explanation. 
Notably, the Gini-smoothness parameter is less sensitive to changes in the reward function when the parameters are close to the edges, i.e., close to $0$ or $1$. 
It may be observed that in these regions, the variances of the parameters are small, which implies that they are more concentrated around their mean. We will later show that this will allow us to mitigate the effect of large gradients on the regret. For simplicity, we assume that all of the functions $r_i(A;p_i)$ have the same smoothness parameters, but the extension is trivial. 
We note that if for all $i\in\brs*{\Nitems}$, the functions $r_i(A;p_i)$ are Gini-smooth, then $r(A;p)$ is also Gini-smooth. Nevertheless, explicitly decomposing the reward into a sum of monotonic Gini-smooth functions leads to a slightly tighter regret bound. We believe that this is due to a technical artefact, but nonetheless modify our analysis since many important cases fall under this scenario. 

An important example that falls under our model is the Probabilistic Maximum Coverage (PMC) problem. In this setting, each arm is a random set that may contain some subset of $\Nitems$ possible items. The agent's goal is to choose a batch of sets such that as many items appear in the sets (i.e., the union of the sets is maximized). Formally, the functions $r_i(A;p)=1-\prod_{j\in A} \br*{1-p_{ij}}$ are the probabilities that an item $i\in\brs*{\Nitems}$ was covered and $r(A;p)$ is the (weighted) expected number of covered items. The smoothness constants for this problem are $\gamma_\infty=1$ and $\gamma_g=\frac{1}{\sqrt{e}}$, independently of the batch size. Another example is the logistic function $r_i(A;p)=\frac{1}{1+C e^{-\sum_{j\in A} p_{ij}}}$, for which the smoothness parameters are equal to $\gamma_\infty=\frac{1}{4}$ and $\gamma_g=\frac{1}{4}\sqrt{1+\log C}$, which are batch size independent, provided that $C$ does not depend on $\Nbatch$. In the linear case, $r_i(A;p)=r(A;p)=\sum_{j\in A} p_{j}$ and the smoothness parameters are $\gamma_\infty=1$ and $\gamma_g=\sqrt{\Nbatch/4}$. In general, we can always bound $\gamma_g \le \frac{1}{2}\sqrt{\Nbatch}\gamma_\infty$, and we will later see that in this case, our bounds are comparable to existing results that only rely on $\gamma_\infty$. 

Similarly to previous work, the performance of an agent is measured according to its regret, i.e., the difference between the reward of an oracle, which acts optimally according to the statistics of the arms, and the cumulative reward of the agent.
However, in many problems, it is impractical to achieve the optimal solution even when the problem's parameters are known. Thus, it is more sensible to compare the performance of an algorithm to the best approximate solution that an oracle can achieve. Denote the optimal action by $A^*=\arg\max_{A\in\mathcal{A}}\brc*{r(A;p)}$ and its value by $r_{\max}$.
An oracle is called an $(\oracleApp,\oracleProb)$ \textit{approximation oracle} if for every parameter set $p$, with probability $\oracleProb$, it outputs a solution whose value is at least $\oracleApp r_{\max}$. We define the \textit{expected approximation regret} of an algorithm as:
\begin{equation} 
\label{eq:regretDef}
R(T) = \appFac r_{\max} T - \sum_{t=1}^T  \E\brs*{r\br*{A_t;p}} \enspace ,
\end{equation}
where the expectation is over the randomness of the environment and the agent's actions, through the oracle. As noted by \citep{chen2016combinatorialA}, in the linear case, and sometimes when arms are independent, the reward function equals to the expectation of the empirical reward, i.e., $r(A_t;\mu)=\E\brs*{r(A_t;X)}$ but unfortunately, it does not necessarily occur when arms are arbitrarily correlated.

We end this section with some notations. Let $\bar{\Nitems}=\sum_{i=1}^M w_i$. Denote the suboptimality gap of a batch $A_t$ by $\dr{A_t} = \oracleApp r_{\max} - r\br*{A_t;p}$. 
The minimal gap of a base arm $j$ is the smallest positive gap of a batch $A$ that contains arm $j$, namely,  $\dr{j,\min}=\min_{A\in\mathcal{A},j\in A,\dr{A}>0}\dr{A}$ and the maximal gap is $\dr{j,\max}=\max_{A\in\mathcal{A},j\in A,\dr{A}>0}\dr{A}$. Note that for all $A\in \mathcal{A}$, $\dr{A} \le \max_j \dr{j,\max}\triangleq \dr{\max}$.
We denote by $\kl(X,Y)$ the Kullback-Leibler (KL) divergence between two random variables $X,Y$. 
Also denote by $\klBin(p,q)=p\log\frac{p}{q} + (1-p)\log\frac{1-p}{1-q}$, the KL divergence between two Bernoulli random variables of means $p$ and $q$.

\section{Algorithm}

\begin{algorithm}[H]
\caption{BC-UCB (\textbf{B}ernstein \textbf{C}ombinatorial - \textbf{U}pper \textbf{C}onfidence \textbf{B}ound)}
\label{alg:BC-UCB}
\begin{algorithmic}[1]
    \STATE Initialize arm counts $N_j(0)=0$
    \FOR{$t=1,\dots,T$}
        \IF{$\exists j\in\brs*{\Narms}: N_j(t-1)=0$}
            \STATE Choose batch $A_t$ such that $j\in A_t$
            
        \ELSE
            \STATE $q_{ij}(t) = \min\brc*{1, \hat{p}_{ij}(t-1) + \sqrt{\frac{6 \hat{V}_{ij}(t-1)\log t}{N_{j}(t-1)}} + \frac{9\log t}{N_{j}(t-1)}}$
            \STATE $A_t=\mathrm{Oracle}(q)$
        \ENDIF
        \STATE Play $A_t$ and observe $X_{ij}(t)$ for all $j\in A_t$ and $i\in\brs*{\Nitems}$
        \STATE Update $N_j(t), \hat{p}_{ij}(t)$ and $\hat{V}_{ij}(t)$ for all $j\in A_t$ and  $i\in\brs*{\Nitems}$
    \ENDFOR
\end{algorithmic}
\end{algorithm}

We suggest a combinatorial UCB-type algorithm with Bernstein based confidence interval, which we call BC-UCB (\textbf{B}ernstein \textbf{C}ombinatorial - \textbf{U}pper \textbf{C}onfidence \textbf{B}ound). The UCB index is defined as 
\begin{equation}
    q_{ij}(t) = \min\brc*{1, \hat{p}_{ij}(t-1) + \sqrt{\frac{6 \hat{V}_{ij}(t-1)\log t}{N_{j}(t-1)}} + \frac{9\log t}{N_{j}(t-1)}} \enspace.
\end{equation}
A pseudocode can be found in Algorithm \ref{alg:BC-UCB}. On the first rounds, the agent initially samples batches to make sure that each base arm is sampled at least once. Afterwards, we calculate the UCB index $q$, and an approximation oracle chooses an action 
$A_t$ such that $r(A_t;q) \ge \oracleApp r(A^*(q);q)$ with probability $\oracleProb$. The agent then plays this action and observes feedback for any arm $j\in A_t$. It finally updates the empirical probabilities and variances and continues to the next round. 

An example for an approximation oracle that can be used when the reward function is monotonic and submodular \footnote{Let $\Omega$ be a finite set. A set function $f:2^\Omega\to\R$ is called submodular if $\forall S,T\!\in\!\Omega, f(S)+f(T) \!\ge\! f(S \cup T) + f(S \cap T)$.} is the greedy oracle, which enjoys an approximation factor of $\oracleApp=1-\frac{1}{e}$ with probability $\oracleProb=1$ \citep{nemhauser1978analysis}. First, the oracle initializes $A_t(0)=\emptyset$, and then selects $\Nbatch$ items sequentially in a greedy manner, i.e. $j_n\in\arg\max_{j\notin A_t(n-1)}\brc*{r\br*{A_t(n-1)\cup \brc*{j};q}}$ and $A_t(n)=A_t(n-1)\cup \brc*{j_n}$, with $A_t=A_t(\Nbatch)$. 

The first term of the confidence interval resembles the standard UCB term of $\sqrt{\frac{3\log t}{2 N_{j}(t-1)}}$, and is actually always smaller, since the empirical variance of variables in $[0,1]$ is always bounded by $\frac{1}{4}$.\footnote{$V=\frac{1}{n}\sum_i X_i^2 - \br*{\frac{1}{n}\sum_i X_i}^2\le \frac{1}{n}\sum_i X_i - \br*{\frac{1}{n}\sum_i X_i}^2 = \br*{\frac{1}{n}\sum_i X_i}\br*{1-\frac{1}{n}\sum_i X_i}\le \frac{1}{4}$.} The second term does not appear in the standard UCB bound, and can slightly affect the regret, since suboptimal arms will be asymptotically sampled $O\br*{\log t}$ times, so the two terms are comparable. Nevertheless, the first term is still dominant, and if the variance of an arm is drastically lower than $\frac{1}{4}$, the confidence bound is significantly tighter. This can happen, for example, when the arm's mean is close to $0$ or $1$. 

We take advantage of this property through the smoothness parameter $\gamma_g$, that only takes into account the sensitivity of the function to parameter changes when the parameters are far away from $0$ or $1$. This will allow us to derive a drastically tighter regret bound, in comparison to existing algorithms, when $\gamma_g = o(\sqrt{\Nbatch}\gamma_\infty)$, as we establish in the following theorem:

\begin{restatable}{theorem-rst}{dependentRegret}
\label{theorem:dependentRegret}
Let $r_i(A;p_i)$ be monotonic Gini-smooth reward functions with smoothness parameters $\gamma_\infty$ and $\gamma_g$, and let $r(A;p)=\sum_{i=1}^\Nitems w_i r_i(A;p_i)$ be the reward function.
For any $T\ge1$, the expected approximation regret of BC-UCB with ($\oracleApp$,$\oracleProb$)-approximation oracle is bounded by
\begin{align}
\label{eq:dependentRegret}
R(T) \le  &
\brs*{8640\gamma_g^2\bar\Nitems^2\sum_{j=1}^{\Narms}  \frac{1}{\dr{j,\min}}+340\gamma_\infty \bar\Nitems \sum_{j=1}^{\Narms}  \br*{1 + \log\frac{\dr{j,\max}}{\dr{j,\min}}}}\ceil*{\frac{\log \Nbatch}{1.61}}^2 \log T \nonumber\\
&+ \Narms\dr{\max} \br*{ 1 + \Nitems\frac{2\pi^2}{3}}.
\end{align}
\end{restatable}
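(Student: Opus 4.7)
The plan is to follow the standard optimism-in-the-face-of-uncertainty template for CMAB, but to combine the Empirical Bernstein confidence intervals used by BC-UCB with condition~(2) of Definition~\ref{def:GiniSmooth} so that the batch-size factor $\Nbatch$ appearing in previous analyses is replaced by the batch-size independent parameter $\gamma_g^2$ (up to logarithmic terms).

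First I would define a ``nice'' event $\NE_t$ on which, for every base arm~$j$ and every coordinate~$i$, the true parameter $p_{ij}$ lies inside its Bernstein confidence interval, and simultaneously the empirical variance $\hat V_{ij}(t)$ is close to the true variance $p_{ij}(1-p_{ij})$. A union bound over $i,j,t$, together with Lemma~\ref{lemma:emp_bernstein} applied once to $\hat p_{ij}$ and once to $\hat V_{ij}$, makes $\Pr(\NE_t^c)$ decay like $1/t^2$, so the contribution of rounds on $\NE_t^c$ is absorbed into the additive constant $\Narms\dr{\max}(1+\Nitems\cdot 2\pi^2/3)$. On $\NE_t$ we have $q_{ij}(t)\ge p_{ij}$, so monotonicity of the $r_i$'s plus the $(\oracleApp,\oracleProb)$-oracle guarantee gives $r(A_t;q(t))\ge \oracleApp\, r_{\max}$ whenever the oracle succeeds, hence $\dr{A_t}\le r(A_t;q(t))-r(A_t;p)$.

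Writing this difference as an integral of the gradient along the segment from $p$ to $q(t)$ and plugging in the Bernstein bound $q_{ij}(t)-p_{ij}\lesssim \sqrt{p_{ij}(1-p_{ij})\log t/N_j(t-1)}+\log t/N_j(t-1)$, the Bernstein part is controlled by Cauchy--Schwarz using the Gini condition:
\[
\sum_{j\in A_t}\frac{\partial r_i}{\partial x_j}\sqrt{\frac{p_{ij}(1-p_{ij})}{N_j(t-1)}}\le \sqrt{\sum_{j\in A_t}\frac{1}{N_j(t-1)}}\;\sqrt{\sum_{j=1}^\Narms p_{ij}(1-p_{ij})\brs*{\frac{\partial r_i}{\partial x_j}}^2}\le \gamma_g\sqrt{\sum_{j\in A_t}\frac{1}{N_j(t-1)}},
\]
while the linear $O(\log t/N_j)$ remainder is bounded using $\partial r_i/\partial x_j\le \gamma_\infty$. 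Summing over $i$ with weights $w_i$ yields a per-round gap bound of the form
\[
\dr{A_t}\lesssim \bar\Nitems\,\gamma_g\sqrt{\log t\sum_{j\in A_t}1/N_j(t-1)}+\bar\Nitems\,\gamma_\infty\log t\sum_{j\in A_t}1/N_j(t-1).
\]

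The final and hardest step is to convert this per-round inequality into the cumulative $\log T$ regret bound. Inverting the Gini bound gives $\sum_{j\in A_t}1/N_j(t-1)\gtrsim \dr{A_t}^2/(\bar\Nitems^2\gamma_g^2 \log t)$ whenever the square-root term dominates, and an analogous $\sum 1/N_j\gtrsim \dr{A_t}/(\bar\Nitems\gamma_\infty\log t)$ for the linear term. I would then use a dyadic peeling argument in the style of the standard CMAB amortization: partition each suboptimal round both by the arm $j\in A_t$ that carries the dominant $1/N_j(t-1)$ contribution and by the dyadic bucket (with ratio $e^{1.61}=5$) in which $\dr{A_t}$ lies, and then sum inside each bucket via the harmonic identity $\sum_{n\le N_j(T)}1/n=O(\log T)$. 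The double peeling is what produces the $\lceil \log \Nbatch/1.61\rceil^2$ factor, and substituting $\dr{A_t}\ge \dr{j,\min}$ for the charged representative yields the main $\sum_j 1/\dr{j,\min}$ term with coefficient $\gamma_g^2\bar\Nitems^2$; the $\gamma_\infty$ remainder gives the $\log(\dr{j,\max}/\dr{j,\min})$ term. The main obstacle is precisely this amortization: a naive charge-to-single-arm argument would reintroduce a $\Nbatch$ factor, and avoiding it requires a careful joint peeling over both the arm counts $N_j(t-1)$ and the per-batch gradient contributions, which is exactly where the $(\log\Nbatch)^2$ overhead is paid in order to buy the $\gamma_g^2$ improvement over $\Nbatch$.
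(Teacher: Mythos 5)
Your overall architecture matches the paper's: a good event on which both $\hat p_{ij}$ and $\hat V_{ij}$ concentrate (contributing the additive $\Narms\dr{\max}\br*{1+\Nitems\tfrac{2\pi^2}{3}}$ term), optimism plus the oracle guarantee to get $\dr{A_t}\le r(A_t;q)-r(A_t;p)$, a per-round bound of the form $\bar\Nitems\gamma_g\sqrt{\log t\sum_{j\in A_t}1/N_j}+\bar\Nitems\gamma_\infty\log t\sum_{j\in A_t}1/N_j$, and a Degenne--Perchet style double peeling producing $\ceil*{\log\Nbatch/1.61}^2$. However, there is a genuine gap at the single most important technical step: your claim that the difference $r_i(A_t;q)-r_i(A_t;p)$ can be bounded by integrating the gradient \emph{along the straight segment} from $p$ to $q$ and then applying Cauchy--Schwarz with the weights $p_{ij}(1-p_{ij})$. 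The Gini condition (\ref{eq:smoothDef}) at an intermediate point $y$ of that segment controls $\sum_j y_j(1-y_j)\br*{\partial r_i(A;y)/\partial x_j}^2$, \emph{not} $\sum_j p_{ij}(1-p_{ij})\br*{\partial r_i(A;y)/\partial x_j}^2$. The ratio $p_{ij}(1-p_{ij})/\br*{y_j(1-y_j)}$ is unbounded: take $p_{ij}=1-\eta$ with $q_{ij}$ clipped at $1$, so that $y_j(1-y_j)\to0$ along the segment while $p_{ij}(1-p_{ij})\approx\eta$ stays fixed; the gradient at such $y$ may be as large as $\gamma_g/\sqrt{y_j(1-y_j)}$, and your displayed inequality fails. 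This is exactly the difficulty that the paper's Lemma~\ref{lemma:paramSens} is built to overcome: it integrates along a \emph{curved} path $r_j(t)=g^{-1}\br*{\brs*{g(x_j+\delta_j)-g(x_j)}t+g(x_j)}$ with $g(z)=\int_0^z dy/\sqrt{y(1-y)}$, which turns the Gini-weighted bound into a genuine Lipschitz bound in the transformed coordinate, and then shows via a three-case analysis (both endpoints below $1/2$, both above, or straddling) that $g(x_j+\delta_j)-g(x_j)\le 3\sqrt2\,u_j$ whenever $\delta_j\le u_j\sqrt{x_j(1-x_j)}$. Without this reparameterization (or an equivalent device) the constant $3\sqrt2$ in front of $\gamma_g\sqrt{\sum_j u_j^2}$ --- and indeed the whole $\gamma_g$-dependence of the bound --- is unjustified.

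Two smaller points. First, your peeling description charges each round to ``the arm $j\in A_t$ that carries the dominant $1/N_j(t-1)$ contribution''; the paper instead counts, for each threshold $a_k g(\Nbatch,\dr{A_t})\log t/\dr{A_t}^2$, how many arms of $A_t$ are still under-sampled (the sets $S_t^k$ and events $\G_t^k$), and shows via Lemma~\ref{lemma:GbarImpossible} that if all these counts were small then $\sum_{j\in A_t}\log t/N_j(t-1)$ would be too small for $\dr{A_t}\le\bar\Nitems c_t(A_t)$ to hold. You correctly identify that a single-arm charge reintroduces a $\Nbatch$ factor, but the fix is a counting argument over under-sampled arms, not a choice of dominant arm. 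Second, the variance concentration event is obtained by applying the scalar Bernstein inequality to the variables $Y_i=(X_i-p)^2$ (yielding $\hat V\le 2p(1-p)+3.5\log t/N_j$), not by a second application of Lemma~\ref{lemma:emp_bernstein}; this matters for the constants $12\sqrt6\gamma_g$ and $34\gamma_\infty$ in $c_t(A_t)$ that propagate into $8640$ and $340$.
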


We can also exploit the problem dependent regret bound to derive a problem independent bound, that is, a bound that holds for any gaps $\dr{j,\min}$:

\begin{restatable}{corollary-rst}{independentRegret}
\label{corollary:independentRegret}
Let $r_i(A;p_i)$ be monotonic Gini-smooth reward functions with smoothnsess parameters $\gamma_\infty$ and $\gamma_g$, and let $r(A;p)=\sum_{i=1}^\Nitems w_i r_i(A;p_i)$ be the reward function. 
For any $T\ge1$, the expected approximation regret of BC-UCB with ($\oracleApp$,$\oracleProb$)-approximation oracle can be bounded by
\begin{align}
R(T) & \le 
2\sqrt{8640}\gamma_g\bar\Nitems \ceil*{\frac{\log \Nbatch}{1.61}} \sqrt{\Narms T\log T} +  \Narms\dr{\max} \br*{ 1 + \Nitems\frac{2\pi^2}{3}} \nonumber\\
& \quad + 340\gamma_\infty \bar\Nitems\Narms\ceil*{\frac{\log \Nbatch}{1.61}}^2 \log T
\br*{2 + \log\frac{\dr{\max}T}{340\gamma_\infty \bar\Nitems \Narms\ceil*{\frac{\log \Nbatch}{1.61}}^2\log T}} \enspace .
\end{align}
\end{restatable}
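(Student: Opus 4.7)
The plan is to convert the gap-dependent bound of Theorem \ref{theorem:dependentRegret} into a gap-independent one via the standard threshold-and-optimize argument, with the twist that the two gap-dependent terms of the theorem must be thresholded independently. The two terms correspond to different additive pieces of the Empirical Bernstein inequality (Lemma \ref{lemma:emp_bernstein}), so they admit separate optimizations.

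The first step is to revisit the proof of Theorem \ref{theorem:dependentRegret} and refine it by introducing two free parameters $M_A, M_B > 0$. The two main terms of the theorem arise respectively from the $\sqrt{2V_s x/s}$ and $3x/s$ pieces of the Bernstein bound, and via the Gini-smoothness decomposition (Definition \ref{def:GiniSmooth}) the per-round gap splits additively into a $\gamma_g$-scaled variance piece and a $\gamma_\infty$-scaled bias piece. Since these are controlled by independent arguments, rounds in which one of the pieces falls below its threshold contribute at most $M_k T$ to that piece of the regret, while the remaining rounds can be handled by the original analysis restricted to actions whose gap exceeds $M_k$. This restriction amounts to replacing every occurrence of $\dr{j,\min}$ in the corresponding term by $\max(\dr{j,\min},M_k)$, followed by the trivial bounds $\sum_j 1/\max(\dr{j,\min},M_A) \le \Narms/M_A$ and $\sum_j(1+\log(\dr{j,\max}/\max(\dr{j,\min},M_B))) \le \Narms(1+\log(\dr{\max}/M_B))$. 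The refined bound is therefore
\begin{equation*}
R(T) \le M_A T + \frac{A^* \Narms \log T}{M_A} + M_B T + B^* \Narms \log T \left(1 + \log\frac{\dr{\max}}{M_B}\right) + C,
\end{equation*}
where $A^* = 8640\gamma_g^2\bar\Nitems^2\lceil\log\Nbatch/1.61\rceil^2$, $B^* = 340\gamma_\infty\bar\Nitems\lceil\log\Nbatch/1.61\rceil^2$, and $C = \Narms\dr{\max}(1+2\pi^2\Nitems/3)$ is the unchanged tail term from the theorem.

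With this refined bound in hand, the next step is routine: optimize each threshold separately. Setting $M_A = \sqrt{A^*\Narms\log T/T}$ balances $M_A T$ against the variance penalty and yields $2\sqrt{A^*\Narms T\log T}$, which is the corollary's first term. Setting $M_B = B^*\Narms\log T/T$ makes the derivative of $M_B T + B^*\Narms\log T\log(1/M_B)$ vanish and yields $B^*\Narms\log T\bigl(2+\log(\dr{\max}T/(B^*\Narms\log T))\bigr)$, which is the corollary's third term. The constant $C$ carries over unchanged and contributes the second term.

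The main obstacle is the refinement step, namely verifying that the theorem's proof supports the two-threshold decomposition without introducing coupling terms between the $M_A$ and $M_B$ analyses. This rests on the additive Gini-smoothness split of the single-round gap into a variance piece and a bias piece, so it amounts to carrying the two independent thresholds through an otherwise unchanged proof of the theorem. Once this bookkeeping is done, the two one-variable minimizations are elementary calculus.
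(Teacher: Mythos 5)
Your argument is correct and is essentially the paper's: introduce a gap threshold, bound the small-gap rounds trivially by (threshold)$\times T$, apply Lemma~\ref{lemma:sumOfSmallGaps} to the large-gap rounds with $\sum_j 1/\dr{j,\min}\le \Narms/\dr{}$ and $\sum_j(1+\log(\dr{j,\max}/\dr{j,\min}))\le \Narms(1+\log(\dr{\max}/\dr{}))$, and optimize; your two one-variable optimizations reproduce the stated constants exactly. The only remark worth making is that the ``refinement step'' you single out as the main obstacle is unnecessary, and your justification for it is imprecise --- the per-round regret $\dr{A_t}$ is a single number and does not itself split into a variance piece and a bias piece, so ``rounds where one piece falls below its threshold contribute $M_kT$ to that piece of the regret'' does not quite typecheck; instead, your two-parameter bound follows immediately from the ordinary single-threshold argument with $\dr{}=M_A+M_B$ (lower-bound the denominator of the $\gamma_g^2$ term by $M_A$ and that of the logarithmic term by $M_B$), which is precisely the paper's choice $\dr{}=\sqrt{u_1\log T/T}+u_2\log T/T$.
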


The proof of Theorem \ref{theorem:dependentRegret} is presented in the following section, along with a proof sketch for Corollary \ref{corollary:independentRegret}. The full proof of the corollary can be found in Appendix \ref{append:independentRegret}.

We start by noting that we could avoid decomposing the reward into sum of $\Nitems$ functions, but the regret bound in this case is slightly looser - the $\bar\Nitems^2$ factor in (\ref{eq:dependentRegret}) is replaced by the larger factor $\Nitems\sum_i w_i^2$, and the $\log\Nbatch$ factor is replaced by $\log\Nbatch\Nitems$. We believe that the logarithmic factor $\log\Nbatch$ is due to a technical artefact, but leave its removal for future work. We also remark that the second term of the problem dependent regret bounds is negligible for small gaps and can always be bounded using the identity $1+\log x\le x$, which yields a regret of $O\br*{\br*{\bar\Nitems^2\gamma_g^2+\bar\Nitems\gamma_\infty\dr{\max}}\ceil*{\frac{\log \Nbatch}{1.61}}^2\sum_{j=1}^{\Narms}  \frac{\log T}{\dr{j,\min}}}$.

To the best of our knowledge, the closest bound to ours appears in \citep{wang2017improving}. From their perspective, our bandit problem has $\Nitems\Narms$ base arms and a batch size of $\Nitems\Nbatch$ with gaps $\dr{ij,\min}=\dr{j,\min}$. Their $L_1$ smoothness parameter 
equals $B=\gamma_\infty \max_i  w_i$. Substituting into their bound yields a regret of $O\br*{\sum_{i,j} \frac{\br*{\gamma_\infty\max_i  w_i}^2\Nitems\Nbatch\log T}{\dr{ij,\min}}}=O\br*{\gamma_\infty^2\Nbatch\sum_{j} \frac{\br*{\max_i  w_i}^2\Nitems^2\log T}{\dr{j,\min}}}$.
Since $\gamma_g\le \frac{1}{2}\sqrt{\Nbatch}\gamma_\infty$, $\dr{\max}\le \bar\Nitems\Nbatch\gamma_\infty$ and $\bar\Nitems\le \Nitems \max_i  w_i$, our regret is tighter when $\gamma_g$ is not trivial (that is, $\gamma_g=o(\sqrt{\Nbatch}\gamma_\infty)$), up to a factor of $\log^2 \Nbatch$.

Alternatively to our approach, it is possible to analyze BC-UCB only based on the $\gamma_\infty$ smoothness. In this case, the analysis will be very similar to that of \cite{kveton2015tight}. This will yield a dominant term that does not depend on the logarithmic factor $\log\Nbatch$, and declines with the variance of the arms. However, it can lead to dramatically worse bounds when $\gamma_g$ is small, so we decided not to pursue this path. Nonetheless, this approach is still worth mentioning when comparing to other algorithms, since when its bounds are combined with ours, we can conclude that the regret of BC-UCB is always tighter than the regret obtained in \citep{wang2017improving}. 
 
On a final note, we return to the PMC bandit problem. In this case, $\gamma_g$ and $\gamma_\infty$ are $O(1)$ and $\dr{\max}$ is $O(\bar\Nitems)$, so the regret is $O\br*{\bar\Nitems^2 \log^2\Nbatch \sum_{j=1}^{\Narms} \frac{\log T}{\dr{j,\min}}}$, which is tighter by a factor of $\Nbatch$ from existing results \citep{wang2017improving}. We will later show that this bound is tight, up to logarithmic factors in $\Nbatch$.

\section{Proving the Regret Upper Bounds}

We start the proof by simplifying the first term of the UCB index. To this end, recall Bernstein's inequality: 

\begin{lemma}[Bernstein's Inequality]
\label{lemma:Bernstein}
Let $\brc*{X_i}_{i=1}^n$ be independent random variables in $\brs*{0,1}$ with mean $\E \brs*{X_i}=p$ and variance $\Var\brc*{X}$. Then, with probability $1-\delta$:
\begin{equation}
\label{eq:Bernstein}
\frac{1}{n}\sum_{i=1}^n X_i \le p + \frac{2 \log 1/\delta}{3n} + \sqrt{\frac{2\Var \brc*{X} \log 1/\delta}{n}} \enspace .
\end{equation}
\end{lemma}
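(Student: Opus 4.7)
The plan is to prove this via the standard Chernoff--Cram\'er (exponential moment) method applied to the centered variables $Y_i = X_i - p$. These satisfy $\E[Y_i]=0$, $|Y_i|\le 1$, and $\E[Y_i^2]=\Var\{X\}$. First I would establish a moment generating function (MGF) bound of the Bernstein type: for $0<\lambda<3$,
\begin{equation*}
\E\brs*{e^{\lambda Y_i}} \le \exp\br*{\frac{\lambda^2 \Var\{X\}/2}{1-\lambda/3}}.
\end{equation*}
The standard way to get this is to Taylor expand the exponential, use $\E[Y_i^k]\le \E[Y_i^2]$ for $k\ge2$ (since $|Y_i|\le1$), and then sum the resulting geometric series in $\lambda/3$.

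Next I would combine independence and Markov's inequality. By independence of the $Y_i$, $\E\brs*{e^{\lambda \sum_i Y_i}}=\prod_i \E\brs*{e^{\lambda Y_i}}$, and Markov's inequality applied to the exponential yields, for any $t>0$ and $0<\lambda<3$,
\begin{equation*}
\Pr\brc*{\textstyle\sum_i Y_i \ge t} \le \exp\br*{-\lambda t + \frac{n\lambda^2 \Var\{X\}/2}{1-\lambda/3}}.
\end{equation*}
Optimizing the right-hand side over $\lambda$ (the standard optimizer is $\lambda^*=\frac{t}{n\Var\{X\}+t/3}$, which lies in $(0,3)$) produces the classical Bernstein tail bound
\begin{equation*}
\Pr\brc*{\textstyle\sum_i Y_i \ge t} \le \exp\br*{-\frac{t^2}{2(n\Var\{X\}+t/3)}}.
\end{equation*}

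Finally, to match the form stated in the lemma, I would invert this tail bound. Setting the right-hand side equal to $\delta$ and writing $L=\log(1/\delta)$ gives the quadratic $t^2 - \frac{2L}{3}t - 2n\Var\{X\} L = 0$, whose positive root is
\begin{equation*}
t = \frac{L}{3} + \sqrt{\frac{L^2}{9}+2n\Var\{X\} L} \le \frac{2L}{3} + \sqrt{2n\Var\{X\} L},
\end{equation*}
using $\sqrt{a+b}\le\sqrt{a}+\sqrt{b}$. Dividing through by $n$ yields exactly the claimed bound on $\frac{1}{n}\sum_i X_i$.

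The main obstacle is really only the first step: establishing the Bernstein-style MGF bound with the right constants. Once that is in hand, the Chernoff argument and the algebraic inversion are routine. An alternative (possibly cleaner) route is to apply the standard Bennett inequality and then weaken $h(u)=(1+u)\log(1+u)-u \ge \frac{u^2/2}{1+u/3}$ to recover Bernstein's form; either way, the delicate point is matching the constant $2/3$ in the linear term, which comes from the geometric series $\sum_{k\ge 2}(\lambda/3)^{k-2}$ in the MGF bound.
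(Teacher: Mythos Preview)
The paper does not actually prove this lemma; it is stated as a classical result (``recall Bernstein's inequality'') and used without proof. Your sketch is a correct and standard derivation of Bernstein's inequality via the Chernoff--Cram\'er method: the MGF bound using $k!\ge 2\cdot 3^{k-2}$, the optimization over $\lambda$, and the inversion via the quadratic are all routine and yield exactly the stated constants. Since the paper treats this as background, there is nothing to compare against beyond noting that your proof is sound.
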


Next, let $V=\frac{1}{n}\sum_{i=1}^n\br*{X_i - \br*{\frac{1}{n}\sum_{i=1}^nX_i}}^2$ be the empirical variance of independent random variables $X_i\in\brs*{0,1}$ with mean $p$, and note that 
$V=\frac{1}{n}\sum_{i=1}^n\br*{X_i - p}^2 -  \br*{\frac{1}{n}\sum_{i=1}^nX_i-p}^2 \le \frac{1}{n}\sum_{i=1}^n\br*{X_i - p}^2$. We can thus define the independent random variables $Y_i=\br*{X_i-p}^2$ and bound $\frac{1}{n}\sum_{i=1}^nY_i$ instead. It is clear that $0\le Y_i\le 1$, and their expectations can also be bounded by
\begin{equation*}
\E \brs*{Y_i} = \Var\brc*{X_i} =\E X_i^2-p^2\le 1\cdot\E X_i-p^2=p(1-p) \enspace .
\end{equation*}
The variance of $Y_i$ can be similarly bounded by $\Var\brc*{Y_i} \le \E \brs*{Y_i^2} \le 1\cdot\E \brs*{Y_i} \le p(1-p)$.
Applying Bernstein's Inequality (\ref{eq:Bernstein}) on $Y_i$ can now give a high probability bound on the empirical variance: with probability $1-\delta$,
\begin{align}
\label{eq:bernstein_variance}
V & \le \frac{1}{n}\sum_{i=1}^nY_i 
\le \E \brs*{Y_i} + \frac{2 \log 1/\delta}{3n} + \sqrt{\frac{2\Var \brc*{Y_i} \log 1/\delta}{n}}  \nonumber \\
& \le p(1-p)+ \frac{2\log 1/\delta}{3n} + \sqrt{\frac{2p(1-p) \log 1/\delta}{n}} \nonumber \\
& \stackrel{(*)}{\le} p(1-p)+ \frac{2\log 1/\delta}{3n} + p(1-p) + \frac{\log 1/\delta}{2n} \nonumber \\
& = 2p(1-p)+ \frac{7\log 1/\delta}{6n} \enspace ,
\end{align}
where $(*)$ utilizes the relation $ab\le \frac{1}{2}\br*{a^2+b^2}$ with $a=\sqrt{2p(1-p)}$ and $b=\sqrt{\frac{\log 1/\delta}{n}}$. An important (informal) conclusion of inequality (\ref{eq:bernstein_variance}), is that if the event under which the inequality holds for $\hat{V}_{ij}$ does occur, then the confidence interval around $\hat{p}_{ij}(t-1)$ can be bounded by:

\begin{align}
\label{eq:GiniMotivation}
\sqrt{\frac{6 \hat{V}_{ij}(t-1)\log t}{N_{j}(t-1)}} + \frac{9\log t}{N_{j}(t-1)} 
 & \le \sqrt{\frac{12p_{ij}(1-p_{ij})\log t}{N_{j}(t-1)}+\frac{7\log\frac{1}{\delta}\log t}{N_{j}^2(t-1)}} + \frac{9\log t}{N_{j}(t-1)} \nonumber \\
 & \le \sqrt{\frac{12p_{ij}(1-p_{ij})\log t}{N_{j}(t-1)}} + \frac{\sqrt{7\log\frac{1}{\delta}\log t}}{N_{j}(t-1)} + \frac{9\log t}{N_{j}(t-1)} \enspace .
\end{align}

For $\delta\approx t^{-3}$, the confidence interval is of the form $u_i\sqrt{p_{i}(1-p_i)}+v_i$, for $u_i=O\br*{\sqrt{\frac{\log t}{N_j(t-1)}}}$ and $v_i=O\br*{\frac{\log t}{N_j(t-1)}}$. We should therefore analyze how this kind of parameter perturbations affect the reward function. To do so, we take advantage of the Gini-weighted smoothness, as stated in the following lemma (see Appendix \ref{append:paramSens} for proof):

\begin{restatable}{lemma-rst}{paramSens}
\label{lemma:paramSens}
Let $f(A;x)$ be a monotonic Gini-smooth function, with smoothness parameters $\gamma_\infty$ and $\gamma_g$. Also let $u,v\in\R^L$ be some constant vectors such that $u_{i}\ge0$ and $v_{i}\ge0$.

For any $x,\epsilon\in\brs*{0,1}^{\Narms}$  such that $\epsilon_{i}\le\min\brc*{u_i\sqrt{x_{i}(1-x_i)}+v_i,1-x_{i}}$, the sensitivity of $f(A;x)$ to parameter change can be bounded by
\begin{equation}
\label{eq:param_free_sensitivity_bound}
f(A;x+\epsilon)-f(A;x) \le 3\sqrt{2}\gamma_g \sqrt{\sum_{i\in A} u_i^2} + \gamma_{\infty}\sum_{i\in A} v_i \enspace .
\end{equation}
\end{restatable}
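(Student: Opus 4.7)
The plan is to split $\epsilon$ into two parts handled respectively by $\gamma_\infty$ and $\gamma_g$, and for the $\gamma_g$ piece apply a weighted Cauchy-Schwarz whose weight cancels the singular factor in the Gini bound. Let $\epsilon_i^v = \min\brc*{\epsilon_i, v_i}$ and $\epsilon_i^u = \epsilon_i - \epsilon_i^v$. Then $\epsilon_i^v \le v_i$, the hypothesis $\epsilon_i \le u_i\sqrt{x_i(1-x_i)} + v_i$ gives $\epsilon_i^u \le u_i\sqrt{x_i(1-x_i)}$, and $\epsilon_i \le 1-x_i$ transfers to $\epsilon_i^u \le 1-x_i$. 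I would decompose
\begin{equation*}
f(A;x+\epsilon) - f(A;x) = \brs*{f(A;x+\epsilon) - f(A;x+\epsilon^u)} + \brs*{f(A;x+\epsilon^u) - f(A;x)},
\end{equation*}
and bound the first bracket by the fundamental theorem of calculus along the segment from $x+\epsilon^u$ to $x+\epsilon$, using $0\le\partial_i f\le\gamma_\infty$, obtaining at most $\gamma_\infty \sum_{i\in A}\epsilon_i^v \le \gamma_\infty \sum_{i\in A} v_i$, which already matches the second term of the target inequality.

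For the second bracket, set $y(s) = x + s\epsilon^u$ for $s\in[0,1]$ and write $f(A;x+\epsilon^u)-f(A;x) = \int_0^1 \sum_{i\in A}\epsilon_i^u\,\partial_i f(A;y(s))\,ds$. I apply Cauchy-Schwarz by splitting $\epsilon_i^u\,\partial_i f$ as $\bigl(\epsilon_i^u/\sqrt{y_i(1-y_i)}\bigr)\cdot\bigl(\sqrt{y_i(1-y_i)}\,\partial_i f\bigr)$, so that
\begin{equation*}
\sum_{i\in A}\epsilon_i^u\,\partial_i f(A;y) \le \sqrt{\sum_{i\in A}\frac{(\epsilon_i^u)^2}{y_i(1-y_i)}} \cdot \sqrt{\sum_{i\in A} y_i(1-y_i)\br*{\partial_i f(A;y)}^2} \le \gamma_g\sqrt{\sum_{i\in A}\frac{(\epsilon_i^u)^2}{y_i(1-y_i)}},
\end{equation*}
where the second factor is bounded by $\gamma_g$ directly from the Gini-weighted smoothness evaluated at $y(s)$.

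The crucial estimate is a lower bound on $y_i(1-y_i)$ that absorbs the $x_i(1-x_i)$ implicit in $\epsilon_i^u$. Since $s\epsilon_i^u \le \epsilon_i^u \le 1-x_i$, the identity $1-y_i = 1-x_i - s\epsilon_i^u$ yields $1-y_i \ge (1-s)(1-x_i)$, and combined with $y_i\ge x_i$ this gives $y_i(1-y_i) \ge (1-s)\,x_i(1-x_i)$. Using $(\epsilon_i^u)^2 \le u_i^2\,x_i(1-x_i)$, the $x_i(1-x_i)$ factor cancels and
\begin{equation*}
\sum_{i\in A}\frac{(\epsilon_i^u)^2}{y_i(1-y_i)} \le \frac{1}{1-s}\sum_{i\in A} u_i^2.
\end{equation*}
Taking the square root and integrating then uses $\int_0^1 (1-s)^{-1/2}\,ds = 2$, giving a bound of the form $C\gamma_g\sqrt{\sum_{i\in A} u_i^2}$ with $C$ a small absolute constant (not exceeding the $3\sqrt{2}$ claimed), after which summing the two brackets finishes the proof.

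The main obstacle I anticipate is boundary behaviour: when $y_i\to 0$ or $y_i\to 1$, both the weight $\sqrt{y_i(1-y_i)}$ and the Gini bound degenerate, and $\partial_i f$ is only guaranteed to exist on $(0,1)^\Narms$. The splitting into $\epsilon^u+\epsilon^v$ is essential here, because a bare $v_i$ term passed through the weighted Cauchy-Schwarz would produce a $v_i^2/(y_i(1-y_i))$ contribution with no $x_i(1-x_i)$ factor available to cancel, which would blow up. With the splitting, the only singularity that survives is the integrable $(1-s)^{-1/2}$ tail, and the boundary values $y_i\in\brc*{0,1}$ only occur at isolated values of $s$ and are handled by the continuity of $f$ on $[0,1]^\Narms$ asserted in Definition~\ref{def:GiniSmooth}.
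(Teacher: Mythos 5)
Your proof is correct, and it takes a genuinely different route to the core estimate. The paper handles the $v$-part the same way you do (a straight segment and the $\gamma_\infty$ gradient bound), but for the $u$-part it reparameterizes the path nonlinearly: setting $g(z)=\int_0^z \frac{dy}{\sqrt{y(1-y)}}$ and moving along $r_i(t)=g^{-1}\br*{\brs*{g(x_i+\delta_i)-g(x_i)}t+g(x_i)}$, the velocity $r_i'(t)$ acquires a factor $\sqrt{r_i(t)(1-r_i(t))}$ that cancels the Gini weight \emph{exactly} after Cauchy--Schwarz, and the remaining work is to bound the increments $g(x_i+\delta_i)-g(x_i)\le 3\sqrt{2}\,u_i$ via the comparison function $h$ and a three-case analysis around $z=\tfrac12$. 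You instead keep the straight-line path $y(s)=x+s\epsilon^u$ and accept that the weight does not cancel, compensating with the pointwise lower bound $y_i(1-y_i)\ge(1-s)\,x_i(1-x_i)$ (which uses $\epsilon_i^u\le 1-x_i$ in an essential way) and the integrable singularity $\int_0^1(1-s)^{-1/2}ds=2$. This avoids the auxiliary functions and the case analysis entirely and actually yields the sharper constant $2$ in place of $3\sqrt{2}$, so your bound implies the lemma as stated. The only detail worth making explicit is the degenerate coordinates: if $x_i\in\brc*{0,1}$ then the hypotheses force $\epsilon_i^u=0$, so those indices contribute nothing and can be dropped from the weighted Cauchy--Schwarz before dividing by $y_i(1-y_i)$; the paper makes the analogous remark, and your closing paragraph shows you are aware of the boundary issue.
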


Next, define the low probability events under which some of the variables are not concentrated in their confidence intervals:
\begin{align}
&\HE_t^p\!=\!\brc*{\exists i,j: \abs{\hat{p}_{ij}(t-1)-p_{ij}} > \sqrt{\frac{6\hat{V}_{ij}(t-1)\log t }{N_j(t-1)}} + \frac{9\log t}{N_j(t-1)}} \label{eq:Hp}\\
&\HE_t^V=\brc*{\exists i,j: \hat{V}_{ij}(t-1) > 2p(1-p)+ 3.5\frac{\log t}{N_j(t-1)}} \label{eq:HV}
\end{align}
Also denote $\HE_t=\HE_t^V\cup \HE_t^p$. Intuitively, even though the regret may be large under $\HE_t$, the event cannot occur many times, and we can therefore analyze the regret under the assumption that $\HE_t$ does not occur. We can then bound the regret similarly to inequality (\ref{eq:GiniMotivation}), combined with Lemma \ref{lemma:paramSens}. Formally, we decompose the regret as follows:

\begin{restatable}{lemma-rst}{regretDecomp}
\label{lemma:regretDecomp}
Let $r_i(A;p)$ be Gini-smooth functions with parameters $\gamma_g$,$\gamma_\infty$, and define 
\begin{equation}
c_t(A_t) = c_1 \sqrt{\sum_{j\in A_t}\frac{\log t}{N_j(t-1)}} + c_2 \sum_{j\in A_t}\frac{\log t}{N_j(t-1)} 
\end{equation}
for $c_1=12\sqrt{6}\gamma_g$ and $c_2=34 \gamma_\infty$. The regret of Algorithm \ref{alg:BC-UCB}, when used with $(\oracleApp,\oracleProb)$ approximation oracle, can be bounded by 
\begin{equation}
\label{eq:regretBound}
R(T) \le \E \brs*{\sum_{t=\Narms+1}^T \dr{A_t}\Ind{\dr{A_t}\le \bar\Nitems c_t(A_t)}} +
\Narms \dr{\max}\br*{ 1 + \Nitems\frac{2\pi^2}{3}} .
\end{equation}
\end{restatable}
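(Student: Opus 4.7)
The plan is to decompose $R(T)$ into three pieces: the initialization phase, rounds on which the concentration failure $\HE_t$ occurs (or the oracle fails), and rounds on which all confidence intervals hold. The first two pieces supply the additive term $\Narms\dr{\max}(1 + \Nitems\frac{2\pi^2}{3})$, while the third yields the indicator-based sum in (\ref{eq:regretBound}). The initialization piece is immediate: the condition $\exists j: N_j(t-1) = 0$ can fire on at most $\Narms$ rounds (each occurrence samples a new arm for the first time), and each such round contributes at most $\dr{\max}$, yielding the additive $\Narms\dr{\max}$.

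\textbf{Bad-event probability.} For $t>\Narms$ I would use $\dr{A_t}\Ind{\HE_t}\le \dr{\max}\Ind{\HE_t}$ and bound $\Pr(\HE_t)$. Applying the Empirical Bernstein inequality (Lemma \ref{lemma:emp_bernstein}) with $x = 3\log t$, together with a union bound over the possible values $N_j(t-1)\in\brc*{1,\ldots,t}$, gives a $3/t^2$ contribution per $(i,j)$-pair to $\Pr(\HE_t^p)$. Applying Bernstein's inequality to $Y_i = (X_i-p_{ij})^2$ exactly as in the derivation of (\ref{eq:bernstein_variance}), with $\delta=t^{-3}$ and the same union bound, yields another $1/t^2$ per pair for $\Pr(\HE_t^V)$. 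Summing over the $\Narms\Nitems$ pairs and using $\sum_{t\ge 1} t^{-2} = \pi^2/6$ produces the $\Narms\Nitems\dr{\max}\cdot\frac{2\pi^2}{3}$ contribution.

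\textbf{Good-event contribution.} Under $\neg\HE_t$ I would first establish optimism: $\neg\HE_t^p$ gives $q_{ij}(t)\ge p_{ij}$, and combined with the monotonicity of each $r_i$ and the oracle's approximation guarantee this yields, on oracle success, $r(A_t;q)\ge \oracleApp r(A^*(q);q)\ge \oracleApp r(A^*;p) = \oracleApp r_{\max}$, so that
\begin{equation*}
\dr{A_t} \le r(A_t;q) - r(A_t;p) = \sum_{i=1}^\Nitems w_i\br*{r_i(A_t;q_i) - r_i(A_t;p_i)}.
\end{equation*}
Next, $\neg\HE_t^V$ together with inequality (\ref{eq:GiniMotivation}) rewrites $q_{ij}(t) - p_{ij}$ in the form $u_j\sqrt{p_{ij}(1-p_{ij})} + v_j$ with $u_j$ of order $\sqrt{\log t/N_j(t-1)}$ and $v_j$ of order $\log t/N_j(t-1)$. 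Applying Lemma \ref{lemma:paramSens} term-by-term to each $r_i$ and summing with weights $w_i$ (using $\sum_i w_i = \bar\Nitems$) produces precisely $\bar\Nitems c_t(A_t)$ with $c_1 = 12\sqrt 6\gamma_g$ and $c_2 = 34\gamma_\infty$. Decomposing $\dr{A_t}$ into the two cases $\dr{A_t}\le\bar\Nitems c_t(A_t)$ and $\dr{A_t}>\bar\Nitems c_t(A_t)$ and trivially bounding $\dr{A_t}\le\dr{\max}$ on the second indicator (whose support lies inside $\HE_t\cup\{\text{oracle fails}\}$) completes the bookkeeping.

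\textbf{Main obstacle.} The delicate pieces are routing the oracle's randomness correctly so that the $1-\oracleProb$ failure mass is absorbed into $\appFac$ in the regret definition (via the trivial lower bound $r(A_t;p)\ge 0$), and carrying the constants cleanly through the composition of Bernstein applied to $\hat V_{ij}$ and Empirical Bernstein applied to $\hat p_{ij}$: Lemma \ref{lemma:paramSens} requires perturbations of the precise form $u_j\sqrt{x_j(1-x_j)} + v_j$, so the split $\sqrt{a+b}\le\sqrt a+\sqrt b$ performed in (\ref{eq:GiniMotivation}) must be executed carefully in order to land the stated constants $c_1$ and $c_2$.
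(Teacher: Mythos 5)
Your proposal matches the paper's proof essentially step for step: the same three-way decomposition (initialization, the bad events $\HE_t^p\cup\HE_t^V$, and the good event), the same concentration argument (Empirical Bernstein with $x=3\log t$ for $\hat p_{ij}$ and Bernstein applied to the squared deviations with $\delta=t^{-3}$ for $\hat V_{ij}$, union-bounded over $i$, $j$ and the value of $N_j(t-1)$, yielding the $\Nitems\Narms\dr{\max}\frac{2\pi^2}{3}$ term), the same application of Lemma \ref{lemma:paramSens} to the perturbation $q_{ij}-p_{ij}\le u_j\sqrt{p_{ij}(1-p_{ij})}+v_j$ to obtain $\dr{A_t}\le\bar\Nitems c_t(A_t)$ on oracle success, and the same cancellation of the oracle-failure mass against the $-\oracleApp(1-\oracleProb)r_{\max}T$ term arising from the approximation-regret definition. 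This is correct and takes the same route as the paper.
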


The proof is in Appendix \ref{append:regretDecomp}. It is interesting to observe that $c_t(A_t)$ is very similar in its form to the confidence interval for the linear combinatorial problem when parameters are independent \citep{combes2015combinatorial}. We have achieved this form of confidence without any independence assumptions, only on the basis of the properties of the reward function. We can therefore adapt the proofs of \citep{kveton2015tight,degenne2016combinatorial} to derive a problem dependent regret bound. 
Since we are interested in bounding the regret of the first term, and due to the initial sampling stage, we assume from this point onward that all of the arms were sampled at least once.

Define $\brc*{a_k}_{k=0}^\infty$ and $\brc*{b_k}_{k=0}^\infty$, two positive decreasing sequences that converge to $0$ and will be determined later, with $b_0=1$. Also define the set $S_t^k=\brc*{j\in A_t: N_j(t-1)\le a_k \frac{g(\Nbatch,\dr{A_t})\log t}{\dr{A_t}^2}}$ for some function $g(\Nbatch,\dr{A_t})$ that will also be determined later, with $S_t^0=A_t$. Intuitively, $S_t^k$ is the set of arms that were chosen on round $t$ and were not sampled enough times. 
Denote the events in which $S_t^k$ contains at least $\Nbatch b_k$ elements, but $S_t^n$ contain less than $\Nbatch b_n$ for any $n<k$, by $\G_t^k = \brc*{\brc*{\abs*{S_t^k}\ge\Nbatch b_k} \cap \brc*{\forall n<k, \abs*{S_t^n}<\Nbatch b_n}}$, and let $\G_t=\cup_{k=1}^\infty \G_t^k$. We show that when $\dr{A_t}\le \bar\Nitems c_t(A_t)$, then $\G_t$ must occur, for the appropriate $g(\Nbatch,\dr{A_t})$. To do so, we first cite a variant of Lemmas 7 and 8 of \citep{degenne2016combinatorial}, with $e=1$, $\Gamma^{(ii)}=1$ and $f(t)=\log t/8$:
\begin{lemma}
\label{lemma:AeventProperties}
Define $\ell = 
\br*{\sum_{k=1}^{k_0}\frac{b_{k-1}-b_k}{a_k} + \frac{b_{k_0}}{a_{k_0}} }$, and let $k_0$ be the smallest index such that $b_{k_0}\le 1/\Nbatch$. Then $\G_t=\cup_{k=1}^{k_0} \G_t^k$. Also, under $\bar\G_t$, the following inequality holds:
\begin{equation}
\sum_{j\in A_t} \frac{\log t}{N_j(t-1)} < \frac{\Nbatch\dr{A_t}^2\ell}{g(\Nbatch,\dr{A_t})} \enspace .
\end{equation}
\end{lemma}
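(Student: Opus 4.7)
The plan is to exploit the fact that $\brc*{a_k}$ is decreasing, which makes $\brc*{S_t^k}$ a nested family $S_t^0 \supseteq S_t^1 \supseteq \cdots$ of subsets of $A_t$. Both claims will follow from this monotonicity together with the observation that, under $\bar\G_t$, the strict bound $\abs*{S_t^k} < \Nbatch b_k$ must hold for every $k \ge 1$ (otherwise the smallest such $k$ would realize $\G_t^k$, contradicting $\bar\G_t$).

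For the first claim, I would show $\G_t^k = \emptyset$ for every $k > k_0$. On $\G_t^k$, the defining conditions in particular require $\abs*{S_t^{k_0}} < \Nbatch b_{k_0} \le 1$, forcing $\abs*{S_t^{k_0}} = 0$ since cardinalities are integer-valued. Nesting then gives $\abs*{S_t^k} \le \abs*{S_t^{k_0}} = 0$, contradicting $\abs*{S_t^k} \ge \Nbatch b_k > 0$. Hence $\G_t = \bigcup_{k=1}^{k_0} \G_t^k$.

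For the inequality, applying the observation above at $k = k_0$ already gives $\abs*{S_t^{k_0}} = 0$, so $A_t = S_t^0$ decomposes disjointly as $\bigsqcup_{k=1}^{k_0} \br*{S_t^{k-1} \setminus S_t^k}$. Any $j$ in the $k$-th shell satisfies $N_j(t-1) > a_k\, g(\Nbatch, \dr{A_t})\log t / \dr{A_t}^2$, which yields $\log t / N_j(t-1) < \dr{A_t}^2 / \br*{a_k\, g(\Nbatch, \dr{A_t})}$. Summing over the shells,
\begin{equation*}
\sum_{j \in A_t} \frac{\log t}{N_j(t-1)} < \frac{\dr{A_t}^2}{g(\Nbatch, \dr{A_t})} \sum_{k=1}^{k_0} \frac{\abs*{S_t^{k-1}} - \abs*{S_t^k}}{a_k}.
\end{equation*}

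Finally, I would apply Abel summation to rewrite the right-hand sum as $\abs*{S_t^0}/a_1 - \abs*{S_t^{k_0}}/a_{k_0} + \sum_{k=1}^{k_0-1} \abs*{S_t^k}\br*{1/a_{k+1} - 1/a_k}$. Using $\abs*{S_t^{k_0}} = 0$, $\abs*{S_t^0} \le \Nbatch = \Nbatch b_0$, $\abs*{S_t^k} < \Nbatch b_k$, and the fact that each $1/a_{k+1} - 1/a_k \ge 0$ (since $a_k$ is decreasing), this is bounded above by $\Nbatch \brs*{b_0/a_1 + \sum_{k=1}^{k_0-1} b_k\br*{1/a_{k+1} - 1/a_k}}$. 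The anticipated obstacle is purely algebraic: one has to verify that this bracketed expression equals exactly $\ell = \sum_{k=1}^{k_0}\tfrac{b_{k-1} - b_k}{a_k} + \tfrac{b_{k_0}}{a_{k_0}}$, which is handled by reindexing $\sum_{k=1}^{k_0-1} b_k/a_{k+1} = \sum_{k=2}^{k_0} b_{k-1}/a_k$ and collecting terms; plugging back gives the claimed strict bound $\Nbatch \dr{A_t}^2 \ell / g(\Nbatch, \dr{A_t})$.
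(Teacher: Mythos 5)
Your proof is correct. The paper itself does not prove this lemma---it imports it as a variant of Lemmas 7 and 8 of \citet{degenne2016combinatorial}---and your argument (nestedness of the $S_t^k$, the shell decomposition $A_t=\bigsqcup_k (S_t^{k-1}\setminus S_t^k)$ using $\abs*{S_t^{k_0}}=0$, and Abel summation to recover $\ell$) is exactly the standard argument from that source, so there is nothing further to compare.
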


Using this lemma, we can now prove that $\G_t$ must occur (see Appendix \ref{append:GbarImpossible} for proof):
\begin{restatable}{lemma-rst}{GbarImpossible}
\label{lemma:GbarImpossible}
If $g(\Nbatch,\dr{A_t})=\br*{864\gamma_g^2+68\frac{\gamma_\infty\dr{A_t}}{\bar\Nitems}}\bar\Nitems^2\Nbatch\ell$, and if $\dr{A_t}\le \bar\Nitems c_t(A_t)$, then $\G_t$ occurs.
\end{restatable}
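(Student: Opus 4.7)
I would prove the contrapositive: assume $\bar\G_t$ occurs and show that this forces $\dr{A_t} > \bar\Nitems c_t(A_t)$. Under $\bar\G_t$, Lemma~\ref{lemma:AeventProperties} yields the key strict inequality
\[
\sum_{j\in A_t} \frac{\log t}{N_j(t-1)} \;<\; \frac{\Nbatch\dr{A_t}^2\ell}{g(\Nbatch,\dr{A_t})}.
\]
Plugging this into the definition of $c_t(A_t)$ and multiplying by $\bar\Nitems$ (using $c_1=12\sqrt{6}\gamma_g$, $c_2=34\gamma_\infty$), it suffices to show
\[
\bar\Nitems\, c_1 \sqrt{\frac{\Nbatch\dr{A_t}^2\ell}{g(\Nbatch,\dr{A_t})}} \;+\; \bar\Nitems\, c_2\, \frac{\Nbatch\dr{A_t}^2\ell}{g(\Nbatch,\dr{A_t})} \;\le\; \dr{A_t}.
\]

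\textbf{Simplification via the choice of $g$.} Substitute $g=\br*{864\gamma_g^2+68\tfrac{\gamma_\infty\dr{A_t}}{\bar\Nitems}}\bar\Nitems^2\Nbatch\ell$ and introduce the abbreviations
\[
\alpha := 864\gamma_g^2, \qquad \beta := 68\,\frac{\gamma_\infty\dr{A_t}}{\bar\Nitems},
\]
so that $g = (\alpha+\beta)\bar\Nitems^2\Nbatch\ell$. A direct calculation gives
\[
\bar\Nitems\,c_1\sqrt{\frac{\Nbatch\dr{A_t}^2\ell}{g}} = \frac{12\sqrt{6}\,\gamma_g\,\dr{A_t}}{\sqrt{\alpha+\beta}} = \frac{\sqrt{\alpha}}{\sqrt{\alpha+\beta}}\,\dr{A_t},
\]
where I used $12\sqrt{6}\gamma_g = \sqrt{864\gamma_g^2} = \sqrt{\alpha}$. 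Similarly,
\[
\bar\Nitems\,c_2\,\frac{\Nbatch\dr{A_t}^2\ell}{g} = \frac{34\gamma_\infty\dr{A_t}}{\bar\Nitems(\alpha+\beta)}\,\dr{A_t} = \frac{\beta/2}{\alpha+\beta}\,\dr{A_t}.
\]
Thus the required inequality reduces to
\[
\frac{\sqrt{\alpha}}{\sqrt{\alpha+\beta}} + \frac{\beta}{2(\alpha+\beta)} \;\le\; 1.
\]

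\textbf{Elementary algebraic step.} Setting $x = \sqrt{\alpha/(\alpha+\beta)}\in[0,1]$, we have $\beta/(\alpha+\beta) = 1-x^2$, so the inequality becomes $x + \tfrac{1}{2}(1-x^2) \le 1$, equivalently $(x-1)^2 \ge 0$, which is trivially true. Since the bound from $\bar\G_t$ is strict, the chain of inequalities ends with $\bar\Nitems c_t(A_t) < \dr{A_t}$, completing the contrapositive.

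\textbf{Main difficulty.} The proof itself is short once one sees that the constants $c_1, c_2$ and the coefficients $864$ and $68$ in $g$ are chosen precisely so that the $\gamma_g$-term contributes $\sqrt{\alpha}/\sqrt{\alpha+\beta}$ and the $\gamma_\infty$-term contributes exactly $\tfrac{1}{2}\cdot\beta/(\alpha+\beta)$. The main obstacle is therefore not the algebra but choosing the constants: one has to set them so that the two additive terms of $\bar\Nitems c_t(A_t)$ telescope to the form $x + (1-x^2)/2$, giving the tight square-completion $(x-1)^2\ge 0$. Any smaller choice of the constants $864$ or $68$ would break this balance.
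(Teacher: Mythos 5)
Your proof is correct and follows essentially the same route as the paper's: the paper argues by contradiction rather than contrapositive, plugs the bound from Lemma~\ref{lemma:AeventProperties} into $c_t(A_t)$, and closes with the inequality $\sqrt{a(a+b)}\le a+\tfrac{b}{2}$, which is exactly your $(x-1)^2\ge 0$ square-completion in the substitution $a=\alpha$, $b=\beta$. No gaps; your observation that the constants are tuned precisely so the two terms telescope to $x+\tfrac12(1-x^2)$ matches the paper's design.
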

A direct result of this lemma is that if $\dr{A_t}\le \bar\Nitems c_t(A_t)$, at least one event $\brc*{\G_t^k}_{k=1}^{k_0}$ occurs. This allows us to further decompose the first term of (\ref{eq:regretBound}), and achieve the final result of Theorem \ref{theorem:dependentRegret}:
\begin{restatable}{lemma-rst}{sumOfSmallGaps}
\label{lemma:sumOfSmallGaps}
The regret from the event $\dr{A_t}\le \bar\Nitems c_t(A_t)$ can be bounded by 
\begin{align}
\sum_{t=\Narms+1}^T& \dr{A_t}\Ind{\dr{A_t}\le \bar\Nitems c_t(A_t)} \nonumber \\ 
& \le \brs*{1728\gamma_g^2\bar\Nitems^2\sum_{j=1}^{\Narms}  \frac{1}{\dr{j,\min}}+68\gamma_\infty \bar\Nitems \sum_{j=1}^{\Narms}  \br*{1 + \log\frac{\dr{j,\max}}{\dr{j,\min}}}}\br*{\sum_{k=1}^{k_0} \frac{a_k}{b_k}} \ell \log T \enspace ,
\end{align}
and if $a_k=b_k=0.2^k$, 
\begin{align}
\label{eq:sumOfSmallGaps}
\sum_{t=\Narms+1}^T &\dr{A_t}\Ind{\dr{A_t}\le \bar\Nitems c_t(A_t)} \nonumber \\
& \le 
\brs*{8640\gamma_g^2\bar\Nitems^2\sum_{j=1}^{\Narms}  \frac{1}{\dr{j,\min}}+340\gamma_\infty \bar\Nitems \sum_{j=1}^{\Narms}  \br*{1 + \log\frac{\dr{j,\max}}{\dr{j,\min}}}}\ceil*{\frac{\log \Nbatch}{1.61}}^2 \log T \enspace .
\end{align}
\end{restatable}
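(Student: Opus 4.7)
The plan is to chain together three ingredients: the fact (Lemma~\ref{lemma:GbarImpossible}) that under $\dr{A_t}\le\bar\Nitems c_t(A_t)$ at least one of the events $\G_t^1,\dots,\G_t^{k_0}$ must fire; the combinatorial structure of $\G_t^k$ (namely $|S_t^k|\ge \Nbatch b_k$), which converts a batch-level quantity into a sum over base arms; and the quadratic inequality that the defining condition of $S_t^k$ imposes on each such base arm. Summing the resulting per-arm bounds over $j$ and $k$ then gives the claim.

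Concretely, I would first write
\[
\sum_{t=\Narms+1}^T \dr{A_t}\Ind{\dr{A_t}\le \bar\Nitems c_t(A_t)} \;\le\; \sum_{k=1}^{k_0}\sum_{t=\Narms+1}^T \dr{A_t}\Ind{\G_t^k},
\]
and then, using $|S_t^k|\ge \Nbatch b_k$ under $\G_t^k$, rewrite each summand as
\[
\dr{A_t}\Ind{\G_t^k}\;\le\;\frac{1}{\Nbatch b_k}\sum_{j\in A_t}\dr{A_t}\Ind{j\in S_t^k,\,\G_t^k}.
\]
Swapping the order of summation isolates, for each $(j,k)$, a sum $\sum_t \dr{A_t}\Ind{j\in A_t\cap S_t^k}$. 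For a fixed $j$, the defining constraint of $S_t^k$ combined with $g(\Nbatch,\dr{A_t})=(864\gamma_g^2+68\gamma_\infty\dr{A_t}/\bar\Nitems)\bar\Nitems^2\Nbatch\ell$ reads
\[
N_j(t-1)\dr{A_t}^2 \;\le\; \alpha + \beta'\dr{A_t},\quad \alpha=864 a_k\gamma_g^2\bar\Nitems^2\Nbatch\ell\log T,\; \beta'=68 a_k\gamma_\infty\bar\Nitems\Nbatch\ell\log T,
\]
which, by the quadratic formula and $\sqrt{u+v}\le\sqrt u+\sqrt v$, gives $\dr{A_t}\le \sqrt{\alpha/N_j(t-1)}+\beta'/N_j(t-1)$.

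To sum $\dr{A_t}$ over the rounds in question I would use a peeling argument on the gap: partition such rounds by $\dr{A_t}\in[2^p\dr{j,\min},2^{p+1}\dr{j,\min})$ for $p=0,\dots,\lceil\log_2(\dr{j,\max}/\dr{j,\min})\rceil$. Within each peel $p$, the constraint bounds the number of rounds by $a_k g(\Nbatch,2^{p+1}\dr{j,\min})\log T/(2^p\dr{j,\min})^2$ (using that $N_j$ strictly increases on each pull of $j$), so the contribution to $\sum_t \dr{A_t}$ in peel $p$ is at most $2\,a_k g(\Nbatch,2^{p+1}\dr{j,\min})\log T/(2^p\dr{j,\min})$. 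The two terms of $g$ separate: the $\gamma_g^2$ part gives a geometric series in $2^{-p}$ which sums to a constant multiple of $\alpha/\dr{j,\min}$, while the $\gamma_\infty$ part produces a constant per peel and therefore accumulates a factor $1+\log(\dr{j,\max}/\dr{j,\min})$ after summing over $p$. Dividing by $\Nbatch b_k$ and summing the resulting bound over $j$ and $k$ then yields the first displayed inequality; plugging $a_k=b_k=0.2^k$ and bounding $k_0\le\lceil\log \Nbatch/1.61\rceil$, $\ell\le 4k_0+1$, $\sum_k a_k/b_k=k_0$, so $\ell\sum_k a_k/b_k\le 5\lceil\log\Nbatch/1.61\rceil^2$, gives~\eqref{eq:sumOfSmallGaps}.

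The main obstacle will be extracting the stated numerical constants $1728$ and $68$ cleanly from the peeling rather than the looser constants that arise from a naive case-split on which of the two terms dominates in the quadratic bound. In particular, since the $\gamma_\infty$ term of $g$ scales with $\dr{A_t}$, its contribution per peel is exactly a constant (the $\dr{A_t}$ and the $\dr{A_t}^{-2}$ in the count interact to leave behind a single factor of $2^{p+1}/2^p=2$), which is what produces the additive $\log(\dr{j,\max}/\dr{j,\min})$ with the right coefficient; making sure the geometric sum for the $\gamma_g^2$ part and the per-peel constant for the $\gamma_\infty$ part both come out with the advertised constants is the bookkeeping step that requires care.
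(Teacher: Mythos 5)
Your proposal matches the paper's proof in its skeleton: both pass from $\Ind{\dr{A_t}\le\bar\Nitems c_t(A_t)}$ to $\sum_k\Ind{\G_t^k}$ via Lemma \ref{lemma:GbarImpossible}, both use $\abs*{S_t^k}\ge\Nbatch b_k$ to write $\Ind{\G_t^k}\le\frac{1}{\Nbatch b_k}\sum_j\Ind{\G_t^{k,j}}$, and both finish with $\ell\sum_k a_k/b_k\le 5\ceil*{\log\Nbatch/1.61}^2$. Where you genuinely diverge is the per-arm counting step. The paper enumerates the \emph{exact} distinct positive gaps $\dr{j,1}>\dots>\dr{j,D_j}$ of batches containing $j$, partitions the range of $N_j(t-1)$ into the disjoint intervals $\left(\theta_k/\dr{j,p-1}^2+\mu_k/\dr{j,p-1},\,\theta_k/\dr{j,p}^2+\mu_k/\dr{j,p}\right]$, counts each interval by its length, and then compares the resulting Abel-type sum to the integrals $\int x^{-2}dx$ and $\int x^{-1}dx$; this yields exactly $2\theta_k/\dr{j,\min}+\mu_k\br*{1+\log\frac{\dr{j,\max}}{\dr{j,\min}}}$ per arm. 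You instead peel dyadically on the gap, $\dr{A_t}\in[2^p\dr{j,\min},2^{p+1}\dr{j,\min})$, and count each peel independently against the same counter $N_j$. That is a valid upper bound and gives the right \emph{form}, but it provably cannot reproduce the stated constants: the upper endpoint $2^{p+1}\dr{j,\min}$ versus the count at the lower endpoint $2^p\dr{j,\min}$ costs a factor $2$ on the $\gamma_g^2$ geometric series (giving $4\theta_k/\dr{j,\min}$ rather than $2\theta_k/\dr{j,\min}$, i.e.\ $3456$ in place of $1728$), and the $\gamma_\infty$ part becomes $2\mu_k$ per peel over roughly $1+\log_2(\dr{j,\max}/\dr{j,\min})$ peels, i.e.\ a coefficient of $2/\log 2\approx 2.9$ on the natural-log term rather than $1$. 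You anticipate exactly this difficulty yourself; the resolution is not more careful bookkeeping within the peeling but switching to the paper's exact-gap partition (or simply accepting constants worse by a small factor, since nothing downstream depends on them being $1728$ and $68$).

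Two minor points. First, the intermediate step $\dr{A_t}\le\sqrt{\alpha/N_j(t-1)}+\beta'/N_j(t-1)$ is unnecessary; the counting argument uses the constraint in its original form $N_j(t-1)\le\theta_k/\dr{A_t}^2+\mu_k/\dr{A_t}$ directly. Second, when you count rounds per peel "using that $N_j$ strictly increases," note that the peels share the single counter $N_j$, so treating them independently is an over-count; it is still a valid upper bound, but it is part of why the exact-gap partition (whose intervals are genuinely disjoint in $N_j$) is tighter.
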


The proof is in Appendix \ref{append:sumOfSmallGaps}. Substituting (\ref{eq:sumOfSmallGaps}) into (\ref{eq:regretBound}) concludes the proof of Theorem \ref{theorem:dependentRegret}. $\hfill\blacksquare$
\vspace{0.3cm}

The proof for the problem independent upper bound of Corollary \ref{corollary:independentRegret} is a direct result of Lemma \ref{lemma:sumOfSmallGaps}. Specifically, the bound can be achieved by decomposing the regret according to Lemma \ref{lemma:regretDecomp}, and then dividing the regret into large gaps ($\dr{A_t}\ge\dr{}$) and small gaps ($\dr{A_t}\le\dr{}$), according to some fixed threshold $\dr{}$. Large gaps are bounded according to Lemma \ref{lemma:sumOfSmallGaps} with $\dr{j,\min}\ge\dr{}$, and small gaps are bounded trivially by $\dr{}T$. The final bound is achieved by optimizing the threshold $\dr{}$. The full proof can be found in Appendix \ref{append:independentRegret}.

\section{Lower Bounds} \label{section:lowerBound}

Although our algorithm enjoys improved upper bounds in comparison to CUCB, it is still interesting to see whether our results are tight in problems where previous bounds are loose. To demonstrate the tightness of our algorithm, we present an instance of the PMC bandit problem, on which our results are tight up to logarithmic factors. We assume throughout the rest of this section that the maximization oracle can output the optimal batch, i.e. has an approximation factor of $\oracleApp=1$, with probability $\oracleProb=1$. This assumption allows us to focus on the difficulty of the problem due to parameter uncertainty and the semi-bandit feedback. We formally state the results in the following proposition:

\begin{proposition}
There exist an instance of the PMC bandit problem with minimal gap $\dr{}$ such that the expected regret of any consistent algorithm\footnote{An algorithm is called consistent if for any problem and any $\alpha>0$, the regret of the algorithm is $R(t)=o(t^\alpha)$ as $t\to\infty$.} is bounded by 
\begin{equation}
\liminf_{t\to\infty} \frac{R(t)}{\log t} \ge \frac{\bar\Nitems^2(\Narms-\Nbatch)}{4\dr{}} \enspace.
\end{equation}
Moreover, for any $T>0$ and $\Narms>\Nbatch$, there exist an instance of the PMC bandit problem such that the expected regret of any algorithm is bounded by 
\begin{equation}
R(T) \ge \frac{1}{20}\bar\Nitems\min\brc*{\sqrt{(\Narms-\Nbatch+1)T},T} \enspace.
\end{equation}
\end{proposition}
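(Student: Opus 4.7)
The plan is to reduce the PMC bandit problem to a classical multi-armed bandit (MAB) via a carefully chosen instance, and then transfer standard MAB lower bounds. Take $\Nitems = 1$ with $w_1 = \bar\Nitems$ and restrict the action set to the $L - K + 1$ batches
\begin{equation*}
\mathcal{A} = \brc*{A_j := \brc*{1,2,\dots,K-1,j} : j = K, K+1, \ldots, L},
\end{equation*}
so that every admissible batch consists of the first $K-1$ ``dummy'' arms together with one choice arm $j\in\brc*{K,\ldots,L}$. Fix $p_{11} = \cdots = p_{1,K-1} = 0$, so the dummy arms deterministically never contain the single item. Then $r(A_j; p) = \bar\Nitems\br*{1 - \prod_{k \in A_j}(1-p_{1k})} = \bar\Nitems\, p_{1j}$, and the feedback from the dummies is identically zero, carrying no information. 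Thus the PMC instance is information-theoretically equivalent to a classical $N := L - K + 1$ armed Bernoulli MAB on arms $p_{1K},\dots,p_{1L}$, with rewards scaled by $\bar\Nitems$. Any algorithm for PMC induces an algorithm for this MAB, so MAB lower bounds transfer.

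For the asymptotic (problem-dependent) claim, I would set $p_{1j^*} = 1/2 + \dr{}/(2\bar\Nitems)$ for one distinguished arm and $p_{1j} = 1/2 - \dr{}/(2\bar\Nitems)$ for the remaining $L-K$ choice arms, so that every suboptimal batch has suboptimality gap exactly $\dr{}$. Applying the classical Lai-Robbins lower bound to the reduced MAB yields
\begin{equation*}
\liminf_{t\to\infty} \frac{R(t)}{\log t} \geq \sum_{j \neq j^*} \frac{\bar\Nitems\br*{p_{1j^*} - p_{1j}}}{\klBin\br*{p_{1j}, p_{1j^*}}}.
\end{equation*}
Upper-bounding the Bernoulli KL by $\klBin(p,q)\leq (p-q)^2/\br*{q(1-q)}$ with $(p-q)^2 = (\dr{}/\bar\Nitems)^2$ and $q(1-q)\leq 1/4$ gives $\klBin\leq 4(\dr{}/\bar\Nitems)^2$, and substituting the $L-K$ terms yields $\bar\Nitems^2(L-K)/(4\dr{})$ as claimed.

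For the finite-time minimax claim, I would invoke the standard minimax lower bound for $N$-armed stochastic bandits (proved via a Bretagnolle-Huber / Pinsker argument on a family of near-uniform Bernoulli instances that differ in which arm has a $\Theta(\sqrt{N/T})$ bias), which states that for any $T>0$ and $N\geq 2$, some instance forces every algorithm to incur $R(T) \geq \frac{1}{20}\min\brc*{\sqrt{NT},T}$ when rewards lie in $[0,1]$. Since our reduction delivers exactly such an $N=L-K+1$ armed Bernoulli bandit rescaled by $\bar\Nitems$ (rewards in $[0,\bar\Nitems]$), the bound transfers multiplicatively, giving $R(T) \geq \frac{1}{20}\bar\Nitems \min\brc*{\sqrt{(L-K+1)T},T}$. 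The main conceptual step, rather than a hard calculation, is verifying the lower-bound-preserving nature of the reduction: the deterministic dummies carry no information and are forced in every admissible batch, so no PMC algorithm can outperform the embedded MAB, and the finite-sample as well as the asymptotic bounds both lift to the PMC problem without loss.
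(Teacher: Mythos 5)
Your construction is essentially the paper's: fix $\Nbatch-1$ deterministic empty dummy arms, force them into every admissible batch, reduce to an $(\Narms-\Nbatch+1)$-armed Bernoulli bandit rescaled by $\bar\Nitems$, and then transfer the Lai--Robbins and minimax lower bounds. Two remarks. First, the paper keeps all $\Nitems$ items but makes them perfectly correlated ($X_{ij}=X_j$), whereas you collapse to a single item of weight $\bar\Nitems$; both are legitimate instances and information-theoretically identical, though the paper's version shows the bound survives with genuinely $\Nitems$ items. Second, one step as written is wrong: you place the optimal arm at $1/2+\dr{}/(2\bar\Nitems)$ and the suboptimal arms at $1/2-\dr{}/(2\bar\Nitems)$, then argue $\klBin(p,q)\le (p-q)^2/\br*{q(1-q)}\le 4(p-q)^2$ ``since $q(1-q)\le 1/4$.'' That inequality runs the wrong way --- an upper bound on $q(1-q)$ gives a \emph{lower} bound on $(p-q)^2/\br*{q(1-q)}$; you would need $q(1-q)\ge 1/4$, which holds only when $q=1/2$ exactly. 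With your symmetric placement you only obtain $\klBin\le 4(\dr{}/\bar\Nitems)^2/\br*{1-\dr{}^2/\bar\Nitems^2}$, which degrades the constant and does not yield the stated $\bar\Nitems^2(\Narms-\Nbatch)/(4\dr{})$. The paper sidesteps this by putting the optimal arm at exactly $1/2$ and the suboptimal arms at $1/2-\epsilon$ with $\dr{}=\bar\Nitems\epsilon$, so that $q(1-q)=1/4$ and $\klBin\le 4\epsilon^2$ holds; make that change and your argument goes through verbatim. The minimax part is fine and matches the paper's invocation of the $\frac{1}{20}\min\brc*{\sqrt{NT},T}$ bound scaled by $\bar\Nitems$.
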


\begin{proof}

Consider the following PMC bandit problem: fix the first $\Nbatch-1$ arms to be empty sets, that is, $X_{ij}=0$ for any $(i,j)\in\brs*{\Nitems}\times\brs*{\Nbatch-1}$, which also implies $p_{ij}=0$. For the rest of the arms $j\in\brc*{\Nbatch,\dots,\Narms}$, we force all of the items to be identically distributed, i.e., $X_{ij}=X_j$ and $p_{ij}=p_{j}$.
We also fix the action set to be $\mathcal{A}=\brc*{A_j}_{j=1}^{\Narms-\Nbatch+1}$, where $A_j=\brc*{1,\dots,\Nbatch-1,j+\Nbatch-1}$ contains all of the $\Nbatch-1$ arms with zero reward plus an additional arm $j+\Nbatch-1$. The expected reward when choosing an action $A_j$ is thus $\bar\Nitems p_j\triangleq \mu_j$, and the problem is equivalent to a $\br*{\Narms-\Nbatch+1}$-armed bandit problem with arm distribution $Y_j=\sum_i w_i X_{j+\Nbatch-1}=\bar\Nitems X_{j+\Nbatch-1}$.
In order to prove the problem dependent regret bound, let $p_j=\frac{1}{2}-\epsilon, \forall j\in\brc*{\Nbatch,\dots,\Narms-1}$ and $p_\Narms=\frac{1}{2}$, or equivalently, $\mu_j=\frac{\bar\Nitems}{2}-\bar\Nitems\epsilon, \forall j\in\brs*{\Narms-\Nbatch}$ and $\mu_{\Narms-\Nbatch+1}=\frac{\bar\Nitems}{2}$. The gaps of the problem are thus $\dr{j}=\bar\Nitems\epsilon\triangleq \dr{}$ for any arm $j\in\brs*{\Narms-\Nbatch}$. For any consistent MAB algorithm, the expected regret of the algorithm can be lower bounded by \citep{lai1985asymptotically}:
\begin{equation}
\label{eq:laiRobbins}
    \lim\inf_{t\to\infty} \frac{R(t)}{\log t} \ge \sum_{j=1}^{\Narms-\Nbatch} \frac{\dr{j}}{\kl\br*{Y_j,Y_{\Narms-\Nbatch+1}}}\enspace .
\end{equation}
The KL divergence can be directly bounded by
\begin{equation*}
    \kl\br*{Y_j,Y_{\Narms-\Nbatch+1}} = \klBin\br*{p_j,p_{\Narms-\Nbatch+1}} = 
    \klBin\br*{\frac{1}{2}-\epsilon,\frac{1}{2}}
    \stackrel{(*)}{\le} 4\epsilon^2 = 4\frac{\dr{}^2}{\bar\Nitems^2} \enspace,
\end{equation*}
where $(*)$ is due to the relation $\klBin(p,q)\le \frac{\br*{p-q}^2}{q(1-q)}$ \citep{csiszar2006context}. Substituting back into (\ref{eq:laiRobbins}) yields the first part of the proposition:
\begin{equation*}
    \lim\inf_{t\to\infty} \frac{R(t)}{\log t} 
    \ge \sum_{j=1}^{\Narms-\Nbatch} \frac{\dr{j}}{\kl\br*{Y_j,Y_{\Narms-\Nbatch+1}}}
    \ge \sum_{j=1}^{\Narms-\Nbatch} \frac{\dr{}}{4\dr{}^2/\bar\Nitems^2}
    = \frac{\bar\Nitems^2\br*{\Narms-\Nbatch}}{4\dr{}}\enspace .
\end{equation*}
Due to the fact that all of the items have the same distribution, our problem is equivalent to an $(\Narms-\Nbatch+1)$ MAB problem with arms in $[0,1]$ scaled by a factor of $\bar\Nitems$. Thus, the problem independent lower bound from \citep{auer2002nonstochastic} can also be applied to this problem, and is scaled by the same factor $\bar\Nitems$:
\begin{equation*}
R(T) \ge \frac{1}{20}\bar\Nitems\min\brc*{\sqrt{(\Narms-\Nbatch+1)T},T} \enspace.
\end{equation*}
\end{proof}

We remark that throughout the proof, we assumed nothing about the correlation between different arms, and thus the bound cannot be improved by assuming this kind of independence. Nevertheless, if items are assumed to be independent, the lower bounds can be drastically improved. We will not tackle the problem of independent items in this paper, but leave it to future work.


\section{Summary} \label{section:summary}

In this work, we introduced BC-UCB, a CMAB algorithm that utilizes Bernstein-type confidence intervals. We defined a new smoothness criterion called Gini-weighted smoothness, and showed that it allows us to derive tighter regret bounds in many interesting problems. We also presented matching lower bounds for such a problem, and thus demonstrated the tightness of our algorithm.

We believe that our concepts can be applied to derive tighter bounds in many interesting settings. Specifically, our analysis includes the PMC bandit problem, that has a central place in the areas of ranked recommendations and influence maximization. We also believe that our results could be extended to the frameworks of cascading bandits and probabilistically triggered arms, but leave this for future work.

Another possible direction involves analyzing specific arm distributions - in our framework, we assumed nothing about the arms' distribution except for its domain, and thus could only take into account very weak concentration properties, and specifically the concentration properties around the edges of the domain. If additional information about the distribution of the arms is present, it should be possible to leverage such information to design more sophisticated smoothness criteria. Such criteria could take into account tighter concentration properties of the arms' distribution, and thus lead to tighter regret bounds. 

Finally, we remark that the lower bounds were possible to derive only since we have required the algorithm to support any arbitrary choice of action set $\mathcal{A}$. For the PMC problem, previous work shows that when $\mathcal{A}$ contains any subset of fixed size $\Nbatch$, the regret bounds can be significantly improved \citep{kveton2015cascading}. It is interesting to see if our technique can be used in this setting to extend these results and derive tighter bounds for any Gini-smooth function.

\acks{The authors thank Asaf Cassel and Esther Derman for their helpful comments on the manuscript.}

\bibliography{references}

\begin{thebibliography}{43}
\providecommand{\natexlab}[1]{#1}
\providecommand{\url}[1]{\texttt{#1}}
\expandafter\ifx\csname urlstyle\endcsname\relax
  \providecommand{\doi}[1]{doi: #1}\else
  \providecommand{\doi}{doi: \begingroup \urlstyle{rm}\Url}\fi

\bibitem[Arora et~al.(2011)Arora, Szepesv{\'a}ri, and
  Zheng]{arora2011sequential}
Pallavi Arora, Csaba Szepesv{\'a}ri, and Rong Zheng.
\newblock \emph{Sequential learning for optimal monitoring of multi-channel
  wireless networks}.
\newblock IEEE, 2011.

\bibitem[Audibert et~al.(2009)Audibert, Munos, and
  Szepesv{\'a}ri]{audibert2009exploration}
Jean-Yves Audibert, R{\'e}mi Munos, and Csaba Szepesv{\'a}ri.
\newblock Exploration--exploitation tradeoff using variance estimates in
  multi-armed bandits.
\newblock \emph{Theoretical Computer Science}, 410\penalty0 (19):\penalty0
  1876--1902, 2009.

\bibitem[Auer et~al.(2002{\natexlab{a}})Auer, Cesa-Bianchi, and
  Fischer]{auer2002finite}
Peter Auer, Nicolo Cesa-Bianchi, and Paul Fischer.
\newblock Finite-time analysis of the multiarmed bandit problem.
\newblock \emph{Machine learning}, 47\penalty0 (2-3):\penalty0 235--256,
  2002{\natexlab{a}}.

\bibitem[Auer et~al.(2002{\natexlab{b}})Auer, Cesa-Bianchi, Freund, and
  Schapire]{auer2002nonstochastic}
Peter Auer, Nicolo Cesa-Bianchi, Yoav Freund, and Robert~E Schapire.
\newblock The nonstochastic multiarmed bandit problem.
\newblock \emph{SIAM journal on computing}, 32\penalty0 (1):\penalty0 48--77,
  2002{\natexlab{b}}.

\bibitem[Bellemare et~al.(2016)Bellemare, Srinivasan, Ostrovski, Schaul,
  Saxton, and Munos]{bellemare2016unifying}
Marc Bellemare, Sriram Srinivasan, Georg Ostrovski, Tom Schaul, David Saxton,
  and Remi Munos.
\newblock Unifying count-based exploration and intrinsic motivation.
\newblock In \emph{Advances in Neural Information Processing Systems}, pages
  1471--1479, 2016.

\bibitem[Bubeck et~al.(2012)Bubeck, Cesa-Bianchi, et~al.]{bubeck2012regret}
S{\'e}bastien Bubeck, Nicolo Cesa-Bianchi, et~al.
\newblock Regret analysis of stochastic and nonstochastic multi-armed bandit
  problems.
\newblock \emph{Foundations and Trends{\textregistered} in Machine Learning},
  5\penalty0 (1):\penalty0 1--122, 2012.

\bibitem[Caro and Gallien(2007)]{caro2007dynamic}
Felipe Caro and J{\'e}r{\'e}mie Gallien.
\newblock Dynamic assortment with demand learning for seasonal consumer goods.
\newblock \emph{Management Science}, 53\penalty0 (2):\penalty0 276--292, 2007.

\bibitem[Carpentier and Valko(2016)]{carpentier2016revealing}
Alexandra Carpentier and Michal Valko.
\newblock Revealing graph bandits for maximizing local influence.
\newblock In \emph{International Conference on Artificial Intelligence and
  Statistics}, 2016.

\bibitem[Cesa-Bianchi and Lugosi(2012)]{cesa2012combinatorial}
Nicolo Cesa-Bianchi and G{\'a}bor Lugosi.
\newblock Combinatorial bandits.
\newblock \emph{Journal of Computer and System Sciences}, 78\penalty0
  (5):\penalty0 1404--1422, 2012.

\bibitem[Chen et~al.(2013)Chen, Wang, and Yuan]{chen2013combinatorial}
Wei Chen, Yajun Wang, and Yang Yuan.
\newblock Combinatorial multi-armed bandit: General framework and applications.
\newblock In \emph{International Conference on Machine Learning}, pages
  151--159, 2013.

\bibitem[Chen et~al.(2016a)Chen, Wang, Yuan, and Wang]{chen2016combinatorialA}
Wei Chen, Yajun Wang, Yang Yuan, and Qinshi Wang.
\newblock Combinatorial multi-armed bandit and its extension to
  probabilistically triggered arms.
\newblock \emph{The Journal of Machine Learning Research}, 17\penalty0
  (1):\penalty0 1746--1778, 2016a.

\bibitem[Chen et~al.(2016b)Chen, Hu, Li, Li, Liu, and
  Lu]{chen2016combinatorialB}
Wei Chen, Wei Hu, Fu~Li, Jian Li, Yu~Liu, and Pinyan Lu.
\newblock Combinatorial multi-armed bandit with general reward functions.
\newblock In \emph{Advances in Neural Information Processing Systems}, pages
  1659--1667, 2016b.

\bibitem[Combes et~al.(2015{\natexlab{a}})Combes, Magureanu, Proutiere, and
  Laroche]{combes2015learning}
Richard Combes, Stefan Magureanu, Alexandre Proutiere, and Cyrille Laroche.
\newblock Learning to rank: Regret lower bounds and efficient algorithms.
\newblock \emph{ACM SIGMETRICS Performance Evaluation Review}, 43\penalty0
  (1):\penalty0 231--244, 2015{\natexlab{a}}.

\bibitem[Combes et~al.(2015{\natexlab{b}})Combes, Shahi, Proutiere,
  et~al.]{combes2015combinatorial}
Richard Combes, Mohammad Sadegh Talebi~Mazraeh Shahi, Alexandre Proutiere,
  et~al.
\newblock Combinatorial bandits revisited.
\newblock In \emph{Advances in Neural Information Processing Systems}, pages
  2116--2124, 2015{\natexlab{b}}.

\bibitem[Csisz{\'a}r and Talata(2006)]{csiszar2006context}
Imre Csisz{\'a}r and Zsolt Talata.
\newblock Context tree estimation for not necessarily finite memory processes,
  via bic and mdl.
\newblock \emph{IEEE Transactions on Information theory}, 52\penalty0
  (3):\penalty0 1007--1016, 2006.

\bibitem[Degenne and Perchet(2016)]{degenne2016combinatorial}
R{\'e}my Degenne and Vianney Perchet.
\newblock Combinatorial semi-bandit with known covariance.
\newblock In \emph{Advances in Neural Information Processing Systems}, pages
  2972--2980, 2016.

\bibitem[Gai et~al.(2010)Gai, Krishnamachari, and Jain]{gai2010learning}
Yi~Gai, Bhaskar Krishnamachari, and Rahul Jain.
\newblock Learning multiuser channel allocations in cognitive radio networks: A
  combinatorial multi-armed bandit formulation.
\newblock In \emph{New Frontiers in Dynamic Spectrum, 2010 IEEE Symposium on},
  pages 1--9. IEEE, 2010.

\bibitem[Gai et~al.(2012)Gai, Krishnamachari, and Jain]{gai2012combinatorial}
Yi~Gai, Bhaskar Krishnamachari, and Rahul Jain.
\newblock Combinatorial network optimization with unknown variables:
  Multi-armed bandits with linear rewards and individual observations.
\newblock \emph{IEEE/ACM Transactions on Networking (TON)}, 20\penalty0
  (5):\penalty0 1466--1478, 2012.

\bibitem[Garivier and Capp{\'e}(2011)]{garivier2011kl}
Aur{\'e}lien Garivier and Olivier Capp{\'e}.
\newblock The kl-ucb algorithm for bounded stochastic bandits and beyond.
\newblock In \emph{Proceedings of the 24th annual Conference On Learning
  Theory}, pages 359--376, 2011.

\bibitem[Gisselbrecht et~al.(2015)Gisselbrecht, Denoyer, Gallinari, and
  Lamprier]{gisselbrecht2015whichstreams}
Thibault Gisselbrecht, Ludovic Denoyer, Patrick Gallinari, and Sylvain
  Lamprier.
\newblock Whichstreams: A dynamic approach for focused data capture from large
  social media.
\newblock In \emph{Ninth International AAAI Conference on Web and Social
  Media}, 2015.

\bibitem[Gopalan and Mannor(2015)]{gopalan2015thompson}
Aditya Gopalan and Shie Mannor.
\newblock Thompson sampling for learning parameterized markov decision
  processes.
\newblock In \emph{Conference on Learning Theory}, pages 861--898, 2015.

\bibitem[Gopalan et~al.(2014)Gopalan, Mannor, and Mansour]{gopalan2014thompson}
Aditya Gopalan, Shie Mannor, and Yishay Mansour.
\newblock Thompson sampling for complex online problems.
\newblock In \emph{International Conference on Machine Learning}, pages
  100--108, 2014.

\bibitem[Hochbaum(1996)]{hochbaum1996approximation}
Dorit~S Hochbaum.
\newblock \emph{Approximation algorithms for NP-hard problems}.
\newblock PWS Publishing Co., 1996.

\bibitem[H{\"u}y{\"u}k and Tekin(2018)]{huyuk2018thompson}
Alihan H{\"u}y{\"u}k and Cem Tekin.
\newblock Thompson sampling for combinatorial multi-armed bandit with
  probabilistically triggered arms.
\newblock \emph{arXiv preprint arXiv:1809.02707}, 2018.

\bibitem[Jaksch et~al.(2010)Jaksch, Ortner, and Auer]{jaksch2010near}
Thomas Jaksch, Ronald Ortner, and Peter Auer.
\newblock Near-optimal regret bounds for reinforcement learning.
\newblock \emph{Journal of Machine Learning Research}, 11\penalty0
  (Apr):\penalty0 1563--1600, 2010.

\bibitem[Kveton et~al.(2014)Kveton, Wen, Ashkan, Eydgahi, and
  Eriksson]{kveton2014matroid}
Branislav Kveton, Zheng Wen, Azin Ashkan, Hoda Eydgahi, and Brian Eriksson.
\newblock Matroid bandits: fast combinatorial optimization with learning.
\newblock In \emph{Proceedings of the Thirtieth Conference on Uncertainty in
  Artificial Intelligence}, pages 420--429. AUAI Press, 2014.

\bibitem[Kveton et~al.(2015{\natexlab{a}})Kveton, Szepesvari, Wen, and
  Ashkan]{kveton2015cascading}
Branislav Kveton, Csaba Szepesvari, Zheng Wen, and Azin Ashkan.
\newblock Cascading bandits: Learning to rank in the cascade model.
\newblock In \emph{International Conference on Machine Learning}, pages
  767--776, 2015{\natexlab{a}}.

\bibitem[Kveton et~al.(2015{\natexlab{b}})Kveton, Wen, Ashkan, and
  Szepesvari]{kveton2015combinatorial}
Branislav Kveton, Zheng Wen, Azin Ashkan, and Csaba Szepesvari.
\newblock Combinatorial cascading bandits.
\newblock In \emph{Advances in Neural Information Processing Systems}, pages
  1450--1458, 2015{\natexlab{b}}.

\bibitem[Kveton et~al.(2015{\natexlab{c}})Kveton, Wen, Ashkan, and
  Szepesvari]{kveton2015tight}
Branislav Kveton, Zheng Wen, Azin Ashkan, and Csaba Szepesvari.
\newblock Tight regret bounds for stochastic combinatorial semi-bandits.
\newblock In \emph{Artificial Intelligence and Statistics}, pages 535--543,
  2015{\natexlab{c}}.

\bibitem[Lai and Robbins(1985)]{lai1985asymptotically}
Tze~Leung Lai and Herbert Robbins.
\newblock Asymptotically efficient adaptive allocation rules.
\newblock \emph{Advances in applied mathematics}, 6\penalty0 (1):\penalty0
  4--22, 1985.

\bibitem[Lattimore and Szepesv{\'a}ri(2018)]{lattimore2018bandit}
Tor Lattimore and Csaba Szepesv{\'a}ri.
\newblock Bandit algorithms.
\newblock \emph{preprint}, 2018.

\bibitem[Lattimore et~al.(2018)Lattimore, Kveton, Li, and
  Szepesvari]{lattimore2018toprank}
Tor Lattimore, Branislav Kveton, Shuai Li, and Csaba Szepesvari.
\newblock Toprank: A practical algorithm for online stochastic ranking.
\newblock In \emph{Advances in Neural Information Processing Systems}, pages
  3949--3958, 2018.

\bibitem[Liu and Zhao(2012)]{liu2012adaptive}
Keqin Liu and Qing Zhao.
\newblock Adaptive shortest-path routing under unknown and stochastically
  varying link states.
\newblock In \emph{Modeling and Optimization in Mobile, Ad Hoc and Wireless
  Networks (WiOpt), 2012 10th International Symposium on}, pages 232--237.
  IEEE, 2012.

\bibitem[Mukherjee et~al.(2018)Mukherjee, Naveen, Sudarsanam, and
  Ravindran]{mukherjee2018efficient}
Subhojyoti Mukherjee, KP~Naveen, Nandan Sudarsanam, and Balaraman Ravindran.
\newblock Efficient-ucbv: An almost optimal algorithm using variance estimates.
\newblock In \emph{Thirty-Second AAAI Conference on Artificial Intelligence},
  2018.

\bibitem[Nemhauser et~al.(1978)Nemhauser, Wolsey, and
  Fisher]{nemhauser1978analysis}
George~L Nemhauser, Laurence~A Wolsey, and Marshall~L Fisher.
\newblock An analysis of approximations for maximizing submodular set
  functions—i.
\newblock \emph{Mathematical programming}, 14\penalty0 (1):\penalty0 265--294,
  1978.

\bibitem[Osband et~al.(2013)Osband, Russo, and Van~Roy]{osband2013more}
Ian Osband, Daniel Russo, and Benjamin Van~Roy.
\newblock (more) efficient reinforcement learning via posterior sampling.
\newblock In \emph{Advances in Neural Information Processing Systems}, pages
  3003--3011, 2013.

\bibitem[Perrault et~al.(2018)Perrault, Perchet, and
  Valko]{perrault2018finding}
Pierre Perrault, Vianney Perchet, and Michal Valko.
\newblock Finding the bandit in a graph: Sequential search-and-stop.
\newblock \emph{arXiv preprint arXiv:1806.02282}, 2018.

\bibitem[Robbins(1952)]{robbins1952some}
Herbert Robbins.
\newblock Some aspects of the sequential design of experiments.
\newblock \emph{Bulletin of the American Mathematical Society}, 58\penalty0
  (5):\penalty0 527--535, 1952.

\bibitem[Thompson(1933)]{thompson1933likelihood}
William~R Thompson.
\newblock On the likelihood that one unknown probability exceeds another in
  view of the evidence of two samples.
\newblock \emph{Biometrika}, 25\penalty0 (3/4):\penalty0 285--294, 1933.

\bibitem[Vaswani et~al.(2015)Vaswani, Lakshmanan, Schmidt,
  et~al.]{vaswani2015influence}
Sharan Vaswani, Laks Lakshmanan, Mark Schmidt, et~al.
\newblock Influence maximization with bandits.
\newblock \emph{arXiv preprint arXiv:1503.00024}, 2015.

\bibitem[Wang and Chen(2017)]{wang2017improving}
Qinshi Wang and Wei Chen.
\newblock Improving regret bounds for combinatorial semi-bandits with
  probabilistically triggered arms and its applications.
\newblock In \emph{Advances in Neural Information Processing Systems}, pages
  1161--1171, 2017.

\bibitem[Wang and Chen(2018)]{wang2018thompson}
Siwei Wang and Wei Chen.
\newblock Thompson sampling for combinatorial semi-bandits.
\newblock In \emph{International Conference on Machine Learning}, pages
  5101--5109, 2018.

\bibitem[Wen et~al.(2017)Wen, Kveton, Valko, and Vaswani]{wen2017online}
Zheng Wen, Branislav Kveton, Michal Valko, and Sharan Vaswani.
\newblock Online influence maximization under independent cascade model with
  semi-bandit feedback.
\newblock In \emph{Proceedings of the 31st International Conference on Neural
  Information Processing Systems}, pages 3026--3036. Curran Associates Inc.,
  2017.

\end{thebibliography}

\appendix

\newpage

\section{Proof of Lemma \ref{lemma:paramSens}}
\label{append:paramSens}
\paramSens* 
\begin{proof}

First, we define the functions
\begin{equation}
g(z) = \int_{0}^{z}\frac{dy}{\sqrt{y(1-y)}}
\quad , \quad
h(z) = \int_{0}^{z}\frac{dy}{\min\brc*{\sqrt{y},\sqrt{1-y}}} \enspace .
\end{equation}
We note that $g(z)$ is well defined for $z\in[0,1]$, since the function $1/\sqrt{y}$ is integrable near $y=0$ and $1/\sqrt{1-y}$ is continuous, so the product $\frac{1}{\sqrt{y(1-y)}}$ is integrable near $z=0$. Symmetrically, the function is also integrable near $y=1$. $h(z)$ can be explicitly written as

\begin{equation}
    h(z) = 
     \begin{cases}
       2\sqrt{z}  &, z\le \frac{1}{2}\\
       2\sqrt{2}-2\sqrt{1-z}  &, z\ge \frac{1}{2} \\
     \end{cases}
\end{equation}

The two functions are closely related: observe that $h'(z) \le g'(z) \le \sqrt{2}h'(z)$ with $g(0)=h(0)=0$, and thus $h(z) \le g(z) \le \sqrt{2}h(z)$. In addition, $g'(z)>0$, and therefore the function is strictly monotonically increasing, so its inverse $g^{-1}$ is well defined. Finally, the relation between the derivatives also yields the property $g(z_2)-g(z_1) \le \sqrt{2}\br*{h(z_2)-h(z_1)}$ for any $z_1\le z_2$ in $[0,1]$.

Next, we bound $f(A;x+\delta)-f(A;x)$, for any $\delta$ such that  $\delta_i\le\min\brc*{u_i\sqrt{x_{i}(1-x_i)},1-x_{i}}$. The bound can be achieved using the gradient theorem:

\begin{equation}
\label{eq:gradTheorem}
    f(A;x+\delta)-f(A;x) = \int_x^{x+\delta} \nabla f(A;y)\cdot dy = \int_0^1 \sum_{i\in A} \frac{\partial f(A;r(t))}{\partial x_i}r'_i(t) dt \enspace ,
\end{equation}

for a parameterization $r(t)$ such that $r_i(0)=x_i$ and $r_i(1)=x_i+\delta_i$. Specifically, we choose the parameterization to be

\begin{equation*}
    r_i(t) = g^{-1} \br*{ \brs*{g(x_i+\delta_i)-g(x_i)}t + g(x_i) } \enspace ,
\end{equation*}

and thus its gradient is 

\begin{equation*}
    r'_i(t) =\frac{g(x_i+\delta_i)-g(x_i)}{g'(r_i(t))}  
    = \br*{g(x_i+\delta_i)-g(x_i)}\sqrt{r_i(t)(1-r_i(t))} \enspace .
\end{equation*}

Substituting back into (\ref{eq:gradTheorem}) yields
{\small
\begin{equation}
\label{eq:gradTheoremCalc}
\begin{aligned}
f(A;x+\delta)-f(A;x)
& = \int_0^1 \sum_{i\in A} \br*{g(x_i+\delta_i)-g(x_i)} \frac{\partial f(A;r(t))}{\partial x_i}\sqrt{r_i(t)(1-r_i(t))} dt \\
& \le \int_0^1 \sqrt{\sum_{i\in A} \br*{g(x_i+\delta_i)-g(x_i)}^2} \sqrt{ \sum_{i\in A} \br*{\frac{\partial f(A;r(t))}{\partial x_i}}^2 r_i(t)(1-r_i(t))} dt \\
& \le \gamma_g\sqrt{\sum_{i\in A} \br*{g(x_i+\delta_i)-g(x_i)}^2} \enspace .
\end{aligned}
\end{equation}
}
The first inequality is due to Cauchy Schwarz and the second uses the definition of the Gini-weighted smoothness (\ref{eq:smoothDef}). All that's left is bounding the differences between different values of $g$. Assume w.l.o.g. that $x_i\neq 0,1$, since otherwise $\delta_i=0$ and the difference is $0$. To calculate the bound, we exploit the relation between $g$ and $h$, and calculate differences over $h$ in three different cases:

\begin{itemize}
    \item If $x_i,x_i+\delta_i\le\frac{1}{2}$, then 
    \begin{equation*}
    \begin{aligned}
    h(x_i+\delta_i)-h(x_i) 
    &= 2\sqrt{x_i+\delta_i}-2\sqrt{x_i}
    = 2\sqrt{x_i}\br*{\sqrt{1+\frac{\delta_i}{x_i}}-1}
    \le 2\sqrt{x_i}\frac{\delta_i}{2x_i} \\
    & =\frac{\delta_i}{\sqrt{x_i}} \le u_i \enspace,
    \end{aligned}
    \end{equation*}
    where the first inequality is since $\sqrt{1+a}\le 1+\frac{a}{2}$ and the second is due to $\delta_i\le u_i\sqrt{x_i}$.
    
    \item If $x_i,x_i+\delta_i\ge\frac{1}{2}$, then 
    \begin{equation*}
    \begin{aligned}
    h(x_i+\delta_i)-h(x_i) 
    & = 2\sqrt{1-x_i}-2\sqrt{1-x_i-\delta_i}
    = 2\sqrt{1-x_i}\br*{1 - \sqrt{1-\frac{\delta_i}{1-x_i}}} \\
    & \le 2\sqrt{1-x_i}\frac{\delta_i}{1-x_i}
    = 2\frac{\delta_i}{\sqrt{1-x_i}}
    \le 2u_i \enspace,
    \end{aligned}
    \end{equation*}
    where the first inequality is due to $1-\sqrt{1-a} \le a$ for $a\in[0,1]$ and the second is since $\delta_i\le u_i\sqrt{1-x_i}$.
    
    \item If $x_i\le\frac{1}{2}$ and $x_i+\delta_i\ge\frac{1}{2}$, then 
    \begin{equation*}
    \begin{aligned}
    h(x_i+\delta_i)-h(x_i) 
    & = \br*{h(x_i+\delta_i)-h(\frac{1}{2})}+\br*{h(\frac{1}{2})-h(x_i)} \\
    & \stackrel{(1)}{\le} 2 \frac{x_i+\delta_i-1/2}{\sqrt{1-1/2}} + \frac{1/2- x_i}{\sqrt{x_i}} \\
    & \stackrel{(2)}{\le} 2 \frac{\delta_i}{\sqrt{x_i}} + \frac{\delta_i}{\sqrt{x_i}} 
    \le 3u_i \enspace.
    \end{aligned}
    \end{equation*}
    $(1)$ uses the bound calculated for the first two cases and $(2)$ takes advantage of the relation $x_i\le\frac{1}{2}$ for the first term and $x_i+\delta_i\ge \frac{1}{2}$ for the second one.
\end{itemize}

To summarize, for any $x_i,x_i+\delta_i\in[0,1]$, we have 

\begin{equation*}
    \begin{aligned}
    g(x_i+\delta_i)-g(x_i) 
    \le \sqrt{2}\brs*{h(x_i+\delta_i)-h(x_i) }
    \le 3\sqrt{2}u_i \enspace .
    \end{aligned}
    \end{equation*}

Substituting back into (\ref{eq:gradTheoremCalc}) leads to the bound

\begin{equation}
\label{eq:gradTheoremCalcEnd}
\begin{aligned}
f(A;x+\delta)-f(A;x)\le \gamma_g\sqrt{\sum_{i\in A} \br*{g(x_i+\delta_i)-g(x_i)}^2}
\le 3\sqrt{2} \gamma_g\sqrt{\sum_{i\in A} u_i^2} \enspace.
\end{aligned}
\end{equation}

Next, we aim to bound $f(A;x+\epsilon)-f(A;x+\delta)$. Fortunately, this term can be easily bounded using the gradient theorem as follows:

\begin{align}
\label{eq:offsetBound}
    f(A;x+\epsilon)-f(A;x+\delta) 
    & = \int_{x+\delta}^{x+\epsilon} \nabla f(A;y)\cdot dy \nonumber \\
    &\le \sup_y \brc*{\norm{\nabla f(A;y)}_{\infty}} \sum_{i\in A} \br*{\epsilon_i-\delta_i} 
    \le \gamma_\infty \sum_{i\in A} v_i
\end{align}

and combining (\ref{eq:gradTheoremCalcEnd}) and (\ref{eq:offsetBound}) concludes the proof.

\end{proof}

\section{Proof of Lemma \ref{lemma:regretDecomp}}
\label{append:regretDecomp}

\regretDecomp*
\begin{proof}

By the definition of the expected regret,
\begin{align}
\label{eq:regretDecompProof}
R(T) & = \appFac r_{\max} T - \sum_{t=1}^T \E\brs*{r\br*{A_t;p}} 
= \sum_{t=1}^T \E\brs*{\dr{A_t}} - \oracleApp(1-\oracleProb)r_{\max}T \nonumber \\
& \le \Narms \max_A\dr{A} + 
\E \brs*{\sum_{t=\Narms+1}^T \dr{A_t}\Ind{\HE_t}}  + \nonumber \\
& \quad \;\,
\E \brs*{\sum_{t=\Narms+1}^T \dr{A_t}\Ind{\bar{\HE}_t}}  - \oracleApp(1-\oracleProb)r_{\max}T , 
\end{align}
where $\HE_t=\HE_t^p\cup \HE_t^V$ is the event in which the concentration bounds fail to hold at time $t$ and $\HE_t^p, \HE_t^V$ are defined in Equations (\ref{eq:Hp}) and (\ref{eq:HV}). $\bar{\HE}_t$ is the complementary event to $\HE_t$. 
The first term in (\ref{eq:regretDecompProof}) is the regret due to the initial sampling, that forces sampling each base arm at least once, on which we apply the bound $\dr{A}\le \dr{\max}$. For the rest of the terms, the initial sampling stage allows us to assume that all of the arms were sampled at least once.
The second term can be bounded using Empirical Bernstein (Lemma \ref{lemma:emp_bernstein}) and Bernstein inequality (\ref{eq:bernstein_variance}) as follows:

{\small
\begin{align}
\label{eq:regret_under_H}
\E& \brs*{\sum_{t=\Narms+1}^T \dr{A_t}\Ind{\HE_t}} \nonumber \\
&\le  \E \brs*{\dr{\max}\sum_{t=\Narms+1}^T \Ind{\HE_t}}\nonumber \\
& \stackrel{(1)}{\le} \dr{\max} \sum_{t=\Narms+1}^T \br*{\Pr\brc*{\HE_t^p} + \Pr\brc*{\HE_t^V}} \nonumber \\
& = \dr{\max} \sum_{t=\Narms+1}^T 
\Pr\brc*{\exists i,j: \abs{\hat{p}_{ij}(t-1)-p_{ij}} > \sqrt{\frac{6\log t \hat{V}_{ij}(t-1)}{N_j(t-1)}} + \frac{9\log t}{N_j(t-1)}} + \nonumber \\
& \;\;\;\;\; \dr{\max} \sum_{t=\Narms+1}^T 
\Pr\brc*{\exists i,j: \hat{V}_{ij}(t-1) > 2p_{ij}(1-p_{ij})+ 3.5\frac{\log t}{N_j(t-1)}} \nonumber \\
& \stackrel{(2)}{\le} \dr{\max} \sum_{t=\Narms+1}^T\sum_{i=1}^\Nitems \sum_{j=1}^\Narms 
\Pr\brc*{\abs{\hat{p}_{ij}(t-1)-p_{ij}} > \sqrt{\frac{6\log t \hat{V}_{ij}(t-1)}{N_j(t-1)}} + \frac{9\log t}{N_j(t-1)}} + \nonumber \\
& \;\;\;\;\; \dr{\max} \sum_{t=\Narms+1}^T\sum_{i=1}^\Nitems \sum_{j=1}^\Narms 
\Pr\brc*{\hat{V}_{ij}(t-1) > 2p_{ij}(1-p_{ij})+ 3.5\frac{\log t}{N_j(t-1)}} \nonumber \\
& \stackrel{(3)}{\le} \dr{\max} \sum_{t=\Narms+1}^T\sum_{i=1}^\Nitems \sum_{j=1}^\Narms \sum_{s=1}^t \Pr\brc*{\abs{\hat{p}_{ij}(t-1)-p_{ij}} > \sqrt{\frac{6\log t \hat{V}_{ij}(t-1)}{s}} + \frac{9\log t}{s},N_j(t-1)=s} + \nonumber \\
&\;\;\;\;\; \dr{\max} \sum_{t=\Narms+1}^T\sum_{i=1}^\Nitems \sum_{j=1}^\Narms \sum_{s=1}^t \Pr\brc*{\hat{V}_{ij}(t-1) > 2p_{ij}(1-p_{ij})+ 3.5\frac{\log t}{s},N_j(t-1)=s} \nonumber \\
& \stackrel{(4)}{\le} \dr{\max}\sum_{t=\Narms+1}^T\sum_{i=1}^\Nitems \sum_{j=1}^\Narms \sum_{s=1}^t 3 t^{-3} + \dr{\max}\sum_{t=\Narms+1}^T\sum_{i=1}^\Nitems \sum_{j=1}^\Narms \sum_{s=1}^t t^{-3} \nonumber \\
& = \dr{\max}\sum_{t=\Narms+1}^T\sum_{i=1}^\Nitems \sum_{j=1}^\Narms 3 t^{-2} + \dr{\max}\sum_{t=\Narms+1}^T\sum_{i=1}^\Nitems \sum_{j=1}^\Narms t^{-2} \nonumber \\
& \le 4\dr{\max}\Nitems\Narms\frac{\pi^2}{6}
\end{align}
}
Inequalities $(1)-(3)$ are simple union bounds: $(1)$ is a union bound on $\HE_t^p$ and $\HE_t^V$ and on different times. $(2)$ is on different functions $i\in\brs*{\Nitems}$ and arms $j\in\brs*{\Narms}$. $(3)$ is a union bound on different values of $N_j(t-1)$, ranging in $s\in\brs*{t}$. $(4)$ uses Empirical Bernstein (Lemma \ref{lemma:emp_bernstein}) with $x=3\log t$ for $\hat{p}_{ij}(t-1)$ and Bernstein's inequality (\ref{eq:bernstein_variance}) for $\hat{V}_{ij}(t-1)$ with $\delta = t^{-3}$. 

Next, we bound the difference $\dr{A_t}$ under the event $\bar{\HE}_t$. Note that under $\bar\HE_t^p$, and for any $i,j$, it holds that $p_{ij} \le q_{ij}(t)$ and 
\begin{equation*}
q_{ij}(t) \le p_{ij} + 2\br*{\sqrt{\frac{6\log t \hat{V}_{ij}(t-1)}{N_j(t-1)}} + \frac{9\log t}{N_j(t-1)}} \enspace .
\end{equation*}
In addition, since $\bar\HE_t^V$ also occurs, we can further bound $q_{ij}$ by
\begin{align}
\label{eq:q_upper_bound}
q_{ij}(t) 
&\le p_{ij} + 2\br*{\sqrt{\frac{6\log t \br*{ 2p_{ij}(1-p_{ij})+ 3.5\frac{\log t}{N_j(t-1)}}}{N_j(t-1)}} + \frac{12\log t}{N_j(t-1)}} \nonumber \\
& \le p_{ij} + 2\br*{\sqrt{\frac{12\log t}{N_j(t-1)}p_{ij}(1-p_{ij})} + \sqrt{21}\frac{\log t}{N_j(t-1)} + \frac{12\log t}{N_j(t-1)}} \nonumber \\
& \le p_{ij} + 4\sqrt{3} \sqrt{\frac{\log t}{N_j(t-1)}p_{ij}(1-p_{ij})} + \frac{34\log t}{N_j(t-1)} \enspace,
\end{align}
where we used the inequality $\sqrt{x+y}\le\sqrt{x}+\sqrt{y}$. 

Notice that $q_{ij}(t)$ is of the form $q_{ij}(t)-p_{ij}\le u_j\sqrt{p_{ij}(1-p_{ij})}+v_j$, for $u_j=4\sqrt{3} \sqrt{\frac{\log t}{N_j(t-1)}}$ and $v_j=34\frac{\log t}{N_j(t-1)}$. We can thus apply Lemma \ref{lemma:paramSens} to bound the difference between the optimistic reward with the UCB index $q$ and the real reward parameters $p$. Combined with the properties of the approximation oracle, it is possible to estimate the error between the optimal action and the selected action, under the assumption that the oracle succeeded in his approximation (an event which we denote by $\F_t$)

\begin{equation*}
\begin{aligned}
\oracleApp r(A^*;p) 
& \stackrel{(1)}{\le} \oracleApp r(A^*;q)  
\stackrel{(2)}{\le} r(A_t;q)
= \sum_{i=1}^\Nitems w_i r_i(A_t;q) \\
& \stackrel{(3)}{\le}\sum_{i=1}^\Nitems w_i \br*{r_i(A_t;p) + 12\sqrt{6}\gamma_g\sqrt{\sum_{j\in A_t} \frac{\log t}{N_j(t-1)}}+34\gamma_\infty\sum_{j\in A_t}\frac{\log t}{N_j(t-1)}} \\
& = r(A_t;p) + \bar\Nitems  \br*{ 12\sqrt{6}\gamma_g\sqrt{\sum_{j\in A_t}\frac{\log t}{N_j(t-1)}}+34\gamma_\infty\sum_{j\in A_t}\frac{\log t}{N_j(t-1)}} \\
& = r(A_t;p) + \bar\Nitems c_t(A_t)
\end{aligned}
\end{equation*}
where $(1)$ is due to the monotonicity of $r$ and $p\le q$, $(2)$ is from the properties of the approximation oracle and $(3)$ applies Lemma \ref{lemma:paramSens}. Consequentially, the sub-optimality gap of action $A_t$ under the events $\bar\HE_t$ and $\F_t$ can be bounded by
\begin{equation*}
\dr{A_t} = \oracleApp r_{\max} -r(A_t;p) 
=  \oracleApp r(A^*;p) -r(A_t;p)
\le \bar\Nitems c_t(A_t) \enspace ,
\end{equation*}

which in turn, allows us to bound the third term of (\ref{eq:regretDecompProof}) by
\begin{align}
\label{eq:delta_under_Hbar}
\E \brs*{\sum_{t=\Narms+1}^T \dr{A_t}\Ind{\bar{\HE}_t}} 
&\le \E \brs*{\sum_{t=\Narms+1}^T \dr{A_t}\Ind{\bar{\HE}_t,\F_t}} +  \E \brs*{\sum_{t=\Narms+1}^T \dr{A_t}\Ind{\bar\F_t}} \nonumber \\
& \le \E \brs*{\sum_{t=\Narms+1}^T \dr{A_t}\Ind{\dr{A_t} \le \bar\Nitems c_t(A_t)}} +  (1-\oracleProb)\oracleApp r_{\max}T \enspace .
\end{align}

Substituting equations (\ref{eq:regret_under_H}) and (\ref{eq:delta_under_Hbar}) into (\ref{eq:regretDecompProof}) leads to the desired result:
\begin{equation*}
R(T) \le 
\Narms\dr{\max}  + 
2\dr{\max}\Nitems\Narms\frac{\pi^2}{6} +
\E \brs*{\sum_{t=\Narms+1}^T \dr{A_t}\Ind{\dr{A_t}\le \bar\Nitems c_t(A_t)}}
\end{equation*}
\end{proof}

\section{Proof of Lemma \ref{lemma:GbarImpossible}}
\label{append:GbarImpossible}

\GbarImpossible*
\begin{proof}

Assume in contradiction that both $\dr{A_t}\le \bar\Nitems c_t(A_t)$ and $\bar\G_t$ occur. From the definition of $c_t(A_t)$ and Lemma \ref{lemma:AeventProperties}, we get
\begin{equation*}
\begin{aligned}
\dr{A_t} & \le \bar\Nitems c_t(A_t) 
= c_1\bar\Nitems\sqrt{ \sum_{j\in A_t}\frac{\log t}{N_j(t-1)}} + \bar\Nitems c_2 \sum_{j\in A_t}\frac{\log t}{N_j(t-1)} \\
& = 12\sqrt{6}\gamma_g\bar\Nitems\sqrt{ \sum_{j\in A_t}\frac{\log t}{N_j(t-1)}} + 34\gamma_\infty \bar\Nitems \sum_{j\in A_t}\frac{\log t}{N_j(t-1)}\\
& < 12\sqrt{6}\gamma_g\bar\Nitems\sqrt{ \frac{\Nbatch\ell\dr{A_t}^2}{g(\Nbatch,\dr{A_t})}} 
+34\gamma_\infty  \bar\Nitems \frac{\Nbatch\ell\dr{A_t}^2}{g(\Nbatch,\dr{A_t})} \\
&= \frac{12\sqrt{6}\gamma_g}{\sqrt{864\gamma_g^2+68\frac{\gamma_\infty\dr{A_t}}{\bar\Nitems}}} \dr{A_t}
+ \frac{34\gamma_\infty }{\bar\Nitems \br*{864\gamma_g^2+68\frac{\gamma_\infty\dr{A_t}}{\bar\Nitems}}} \dr{A_t}^2 \\
& = \frac{12\sqrt{6}\gamma_g\sqrt{864\gamma_g^2+68\frac{\gamma_\infty\dr{A_t}}{\bar\Nitems}} + 34\frac{\gamma_\infty\dr{A_t}}{\bar\Nitems} }{\br*{864\gamma_g^2+68\frac{\gamma_\infty\dr{A_t}}{\bar\Nitems}}} \dr{A_t}\\
& \stackrel{(*)}{\le} \frac{864\gamma_g^2+34\frac{\gamma_\infty\dr{A_t}}{\bar\Nitems} + 34\frac{\gamma_\infty\dr{A_t}}{\bar\Nitems} }{\br*{864\gamma_g^2+68\frac{\gamma_\infty\dr{A_t}}{\bar\Nitems}}} \dr{A_t}\\
&=\dr{A_t} \enspace ,
\end{aligned}
\end{equation*}
where $(*)$ is due to $\sqrt{a(a+b)}\le a + \frac{b}{2}$. We got $\dr{A_t}<\dr{A_t}$, which is a contradiction, and therefore, if $\dr{A_t}\le \bar\Nitems c_t(A_t)$, then $\bar\G_t$ cannot occur.
\end{proof}

\section{Proof of Lemma \ref{lemma:sumOfSmallGaps}}
\label{append:sumOfSmallGaps}
\sumOfSmallGaps*
\begin{proof}

Recall that  $\G_t^k = \brc*{\brc*{\abs*{S_t^k}\ge\Nbatch b_k} \cap \brc*{\forall n<k, \abs*{S_t^n}<\Nbatch b_n}}$. Specifically, if $\G_t^k$ occurs, then at least $\Nbatch b_k$ arms were sampled less than $a_k\frac{g(\Nbatch,\dr{A_t})\log t}{\dr{A_t}^2}$ times. 
Denote the sub-event of $\G_t^k$ in which $j$ is one of these arms by $\G_t^{k,j}=\G_t^k\cap\brc*{j\in A_t, N_j(t-1)\le a_k\frac{g(\Nbatch,\dr{A_t})\log t}{\dr{A_t}^2}}$. Therefore, if $\G_t^k$ occurs, there are at least  $\Nbatch b_k$ arms for which $\G_t^{k,j}$ occur, and 
\begin{equation*}
    \Ind{\G_t^{k}} \le \frac{1}{\Nbatch b_k}\sum_{j=1}^{\Narms} \Ind{\G_t^{k,j}} \enspace .
\end{equation*}

Next, we apply Lemma \ref{lemma:GbarImpossible}, and bound the regret from the event $\dr{A_t}\le \bar\Nitems c_t(A_t)$ by
\begin{equation*}
\begin{aligned}
    \sum_{t=\Narms+1}^T \dr{A_t}\Ind{\dr{A_t}\le \bar\Nitems c_t(A_t)} 
    \le \sum_{t=\Narms+1}^T \sum_{k=1}^{k_0}\dr{A_t}\Ind{\G_t^k}
    \le \sum_{t=\Narms+1}^T \sum_{k=1}^{k_0}\sum_{j=1}^{\Narms} \frac{\dr{A_t}}{\Nbatch b_k}\Ind{\G_t^{k,j}} \enspace .
\end{aligned}
\end{equation*}
Denote the number of possible positive gaps of batches that contain arm $j$ by $D_j$, and assume $\dr{j,1}>,\dots>\dr{j,D_j}=\dr{j,\min}>0$ with $\dr{j,0}=\infty$. We decompose the regret as 

\begin{align}
\label{eq:regretBoundDecomposeA}
    &\sum_{t=\Narms+1}^T \dr{A_t} \Ind{\dr{A_t}\le \bar\Nitems c_t(A_t)} 
    \le \sum_{t=\Narms+1}^T \sum_{k=1}^{k_0}\sum_{j=1}^{\Narms}\sum_{n=1}^{D_j} \frac{\dr{j,n}}{\Nbatch b_k}\Ind{\G_t^{k,j},\dr{A_t}=\dr{j,n}} \nonumber \\
    & \le \sum_{t=\Narms+1}^T \sum_{k=1}^{k_0}\sum_{j=1}^{\Narms}\sum_{n=1}^{D_j} \frac{\dr{j,n}}{\Nbatch b_k}\Ind{j\in A_t, N_j(t-1)\le a_k\frac{g(\Nbatch,\dr{A_t})\log t}{\dr{j,n}^2},\dr{A_t}=\dr{j,n}} \enspace .
\end{align}

Denote $\theta_{k}= 864\gamma_g^2\bar\Nitems^2\Nbatch\ell a_k \log T$ and $\mu_{k}= 68\gamma_\infty\bar\Nitems\Nbatch\ell a_k \log T$. Under these notations
{\small
\begin{equation*}
\begin{aligned}
   & \sum_{t=\Narms+1}^T  \sum_{n=1}^{D_j}  \dr{j,n}\Ind{j\in A_t, N_j(t-1)\le a_k\frac{g(\Nbatch,\dr{A_t})\log t}{\dr{j,n}^2},\dr{A_t}=\dr{j,n}} \\
    & \stackrel{(1)}{\le} \sum_{t=\Narms+1}^T \sum_{n=1}^{D_j} \dr{j,n}\Ind{j\in A_t, N_j(t-1)\le \frac{\theta_{k}}{\dr{j,n}^2}+\frac{\mu_{k}}{\dr{j,n}},\dr{A_t}=\dr{j,n}} \\
    & \stackrel{(2)}{\le} \sum_{t=\Narms+1}^T \sum_{n=1}^{D_j} \sum_{p=1}^n \dr{j,n}\Ind{j\in A_t, N_j(t-1) \in\left( \frac{\theta_{k}}{\dr{j,p-1}^2}+\frac{\mu_{k}}{\dr{j,p-1}}, \frac{\theta_{k}}{\dr{j,p}^2}+\frac{\mu_{k}}{\dr{j,p}} \right] ,\dr{A_t}=\dr{j,n}} \\
    & \stackrel{(3)}{\le} \sum_{t=\Narms+1}^T \sum_{n=1}^{D_j} \sum_{p=1}^n \dr{j,p}\Ind{j\in A_t, N_j(t-1) \in\left( \frac{\theta_{k}}{\dr{j,p-1}^2}+\frac{\mu_{k}}{\dr{j,p-1}}, \frac{\theta_{k}}{\dr{j,p}^2}+\frac{\mu_{k}}{\dr{j,p}} \right] ,\dr{A_t}=\dr{j,n}} \\
    & \stackrel{(4)}{\le} \sum_{t=\Narms+1}^T \sum_{n=1}^{D_j} \sum_{p=1}^{D_j} \dr{j,p}\Ind{j\in A_t, N_j(t-1) \in\left( \frac{\theta_{k}}{\dr{j,p-1}^2}+\frac{\mu_{k}}{\dr{j,p-1}}, \frac{\theta_{k}}{\dr{j,p}^2}+\frac{\mu_{k}}{\dr{j,p}} \right] ,\dr{A_t}=\dr{j,n}} \\
    & \stackrel{(5)}{\le} \sum_{t=\Narms+1}^T \sum_{p=1}^{D_j} \dr{j,p}\Ind{j\in A_t, N_j(t-1) \in\left( \frac{\theta_{k}}{\dr{j,p-1}^2}+\frac{\mu_{k}}{\dr{j,p-1}}, \frac{\theta_{k}}{\dr{j,p}^2}+\frac{\mu_{k}}{\dr{j,p}} \right] ,\dr{A_t}>0} \\
    & \stackrel{(6)}{\le} \frac{\theta_{k}}{\dr{j,1}} +\mu_k + \sum_{p=2}^{D_j} \dr{j,p}\theta_{k} \br*{\frac{1}{\dr{j,p}^2}-\frac{1}{\dr{j,p-1}^2}} + \sum_{p=2}^{D_j} \dr{j,p}\mu_{k} \br*{\frac{1}{\dr{j,p}}-\frac{1}{\dr{j,p-1}}} \\
    & = \frac{\theta_{k}}{\dr{j,D_j}} + +\mu_k + \theta_{k} \sum_{p=1}^{D_j-1}  \frac{\dr{j,p}-\dr{j,p+1}}{\dr{j,p}^2} + \mu_{k} \sum_{p=1}^{D_j-1}  \frac{\dr{j,p}-\dr{j,p+1}}{\dr{j,p}}\\
    & \le \frac{\theta_{k}}{\dr{j,D_j}} + \mu_k +\theta_{k}\int_{\dr{j,D_j}}^{\dr{j,1}} x^{-2} dx + \mu_{k}\int_{\dr{j,D_j}}^{\dr{j,1}} x^{-1} dx \\
    &\le  \frac{2\theta_{k}}{\dr{j,D_j}} +\mu_{k} + \mu_{k}\log \frac{\dr{j,1}}{\dr{j,D_j}} \\
    & =  \frac{2\theta_{k}}{\dr{j,\min}} + \mu_{k}\br*{1 + \log \frac{\dr{j,\max}}{\dr{j,\min}}} \enspace .
\end{aligned}
\end{equation*}
}
We added numbering through the first lines, as changes between consecutive lines can be hard to discern. In $(1)$, we bounded $a_k g(\Nbatch,\dr{A_t})\log t \le \theta_{k} + \mu_k \dr{A_t}$. In $(2)$, we divided the interval $N_j(t-1)\le \frac{\theta_{k}}{\dr{j,n}^2}$ into non overlapping sub-interval and used union bound. Next, we replaced $\dr{j,n}$ by larger $\dr{j,p}$ in $(3)$, and extended the internal sum to $D_j$ in $(4)$. Finally, in $(5)$ we replaced the sum over specific positive gaps by the event $\dr{A_t}>0$ and in $(6)$ we bounded the maximal number of times that each of the indicators can be nonzero by he length of the interval. The rest of the lines involve reordering and bounding a summation by an integral. 
Substituting back into (\ref{eq:regretBoundDecomposeA}) gives the first desired result:

\begin{equation*}
\begin{aligned}
    \sum_{t=\Narms+1}^T & \dr{A_t}\Ind{\dr{A_t}\le \bar\Nitems c_t(A_t)} 
     \le \sum_{k=1}^{k_0}\sum_{j=1}^{\Narms} \frac{1}{\Nbatch b_k}\br*{\frac{2\theta_{k}}{\dr{j,\min}} +\mu_{k}\br*{1 + \log \frac{\dr{j,\max}}{\dr{j,\min}}}} \\
    & \le \brs*{1728\gamma_g^2\bar\Nitems^2\sum_{j=1}^{\Narms}  \frac{1}{\dr{j,\min}}+68\gamma_\infty \bar\Nitems \sum_{j=1}^{\Narms}  \br*{1 + \log\frac{\dr{j,\max}}{\dr{j,\min}}}}\br*{\sum_{k=1}^{k_0} \frac{a_k}{b_k}} \ell \log T \enspace .
\end{aligned}
\end{equation*}

Similarly to appendix C of \citep{degenne2016combinatorial}, choosing $a_k=b_k=b^k$ for $b=0.2$ 
\begin{equation*}
\begin{aligned}
    \br*{\sum_{k=1}^{k_0} \frac{a_k}{b_k}} \ell = \br*{\sum_{k=1}^{k_0} \frac{a_k}{b_k}}\brs*{\sum_{k=1}^{k_0}\frac{b_{k-1}-b_k}{a_k} + \frac{b_0}{a_0}} 
    \le \frac{1}{b} \ceil*{\frac{\log \Nbatch}{\log 1/b}}^2 
    \le 5 \ceil*{\frac{\log \Nbatch}{1.61}}^2 \enspace ,
\end{aligned}
\end{equation*}

which concludes the second part of the results.
\end{proof}

\section{Proof of Corollary \ref{corollary:independentRegret}}
\label{append:independentRegret}
\independentRegret*

\begin{proof}

We start from Lemma \ref{lemma:regretDecomp}, but divide the first term of the regret into large gaps $\dr{A_t}\ge\dr{}$ and small gaps $\dr{A_t}\le\dr{}$. We then use Lemma \ref{lemma:sumOfSmallGaps} only for the large gaps, and bound the regret of the smaller gaps by the trivial bound $\dr{}T$:
\begin{equation*}
\begin{aligned}
R(T) 
& \le \E \brs*{\sum_{t=\Narms+1}^T \dr{A_t}\Ind{\dr{A_t}\le \bar\Nitems c_t(A_t)}} + \Narms \dr{\max}\br*{ 1 + \Nitems\frac{2\pi^2}{3}} \\
& \le \E \brs*{\sum_{t=\Narms+1}^T \dr{A_t}\Ind{\dr{A_t}\le \bar\Nitems c_t(A_t),\dr{A_t}\ge\dr{}}} + \Narms \dr{\max} \br*{1 + \Nitems\frac{2\pi^2}{3}} \\ 
& \quad +\E \brs*{\sum_{t=1}^T \dr{A_t}\Ind{\dr{A_t}<\dr{}}} \\
& \le \brs*{8640\gamma_g^2\bar\Nitems^2\!\!\sum_{j:\dr{j,\min}>\dr{}} \!  \frac{1}{\dr{j,\min}}+340\gamma_\infty \bar\Nitems \!\!\sum_{j:\dr{j,\min}>\dr{}} \! \br*{1 + \log\frac{\dr{j,\max}}{\dr{j,\min}}}}\ceil*{\frac{\log \Nbatch}{1.61}}^2 \log T \\
& \quad +\Narms\dr{\max} \br*{ 1 + \Nitems\frac{2\pi^2}{3}}  + T\dr{}\\
& \le \brs*{8640\gamma_g^2\bar\Nitems^2 \frac{\Narms}{\dr{}}+340\gamma_\infty \bar\Nitems \Narms \br*{1 + \log\frac{\dr{\max}}{\dr{}}}}\ceil*{\frac{\log \Nbatch}{1.61}}^2 \log T \\
& \quad + \Narms\dr{\max} \br*{ 1 + \Nitems\frac{2\pi^2}{3}}  + T\dr{} .
\end{aligned}
\end{equation*}
Next, set $\dr{} = \sqrt{\frac{u_1\log T}{T}} + \frac{u_2\log T}{T}$, for $u_1=8640\gamma_g^2\bar\Nitems^2\Narms\ceil*{\frac{\log \Nbatch}{1.61}}^2$ and $u_2=340\gamma_\infty \bar\Nitems \Narms\ceil*{\frac{\log \Nbatch}{1.61}}^2$. 
Substituting this value gives the desired results:

\begin{equation*}
\begin{aligned}
R(T) 
& \le \brs*{ \frac{8640\gamma_g^2\bar\Nitems^2\Narms}{\sqrt{\frac{u_1\log T}{T}} + \frac{u_2\log T}{T}}+340\gamma_\infty \bar\Nitems \Narms \log\frac{\dr{\max}}{\sqrt{\frac{u_1\log T}{T}} + \frac{u_2\log T}{T}}}\ceil*{\frac{\log \Nbatch}{1.61}}^2 \log T \\
& \quad + 340\gamma_\infty \bar\Nitems\Narms\ceil*{\frac{\log \Nbatch}{1.61}}^2 \log T  +  \Narms\dr{\max} \br*{ 1 + \Nitems\frac{2\pi^2}{3}}  + T\br*{\sqrt{\frac{u_1\log T}{T}} + \frac{u_2\log T}{T}} \\
& \le \brs*{ \frac{8640\gamma_g^2\bar\Nitems^2\Narms}{\sqrt{\frac{u_1\log T}{T}}}+340\gamma_\infty \bar\Nitems \Narms \log\frac{\dr{\max}}{\frac{u_2\log T}{T}}}\ceil*{\frac{\log \Nbatch}{1.61}}^2 \log T \\
& \quad + 340\gamma_\infty \bar\Nitems\Narms\ceil*{\frac{\log \Nbatch}{1.61}}^2 \log T  +  \Narms\dr{\max} \br*{ 1 + \Nitems\frac{2\pi^2}{3}}  + \sqrt{u_1 T\log T} + u_2\log T \\ 
& = 2\sqrt{8640}\gamma_g\bar\Nitems \ceil*{\frac{\log \Nbatch}{1.61}} \sqrt{\Narms T\log T} +  \Narms\dr{\max} \br*{ 1 + \Nitems\frac{2\pi^2}{3}}\\
& \quad + 340\gamma_\infty \bar\Nitems\Narms\ceil*{\frac{\log \Nbatch}{1.61}}^2 \log T
\br*{2 + \log\frac{\dr{\max}T}{340\gamma_\infty \bar\Nitems \Narms\ceil*{\frac{\log \Nbatch}{1.61}}^2\log T}}
\end{aligned}
\end{equation*}

\end{proof}

\end{document}